\newtheorem{lemma}{Lemma}
\newtheorem{theorem}{Theorem}
\newtheorem{definition}{Definition}
\def\E{\mathbb{E}}
\def\R{\mathbb{R}}
\newcommand{\alekhcomment}[1]{\textbf{ALEKH:} #1}
\newcommand{\link}{\ensuremath{g}}
\newcommand{\linkloss}{\ensuremath{\Phi}}
\newcommand{\linkopt}{\ensuremath{\link}}
\newcommand{\wopt}{\ensuremath{w^*}}
\newcommand{\Wopt}{\ensuremath{W^*}}
\newcommand{\lossn}{\ensuremath{\ell_n}}
\newcommand{\order}{\ensuremath{\mathcal{O}}}
\newcommand{\ip}[2]{\ensuremath{\left \langle #1, #2 \right \rangle}}
\newcommand{\norm}[1]{\ensuremath{\| #1 \|}}
\newcommand{\covhat}{\ensuremath{\widehat{\Sigma}}}
\newcommand{\linkbasis}{\ensuremath{G}}
\newcommand{\yiter}[1]{\ensuremath{\hat y^{(#1)}}}
\newcommand{\ziter}[1]{\ensuremath{\tilde y^{(#1)}}}
\newcommand{\xweights}[1]{\ensuremath{\xwplain_{#1}}}
\newcommand{\calweights}[1]{\ensuremath{\calwplain_{#1}}}
\newcommand{\yitermat}[1]{\ensuremath{\hat Y^{(#1)}}}
\newcommand{\xwplain}{\ensuremath{W}}
\newcommand{\calwplain}{\ensuremath{\tilde W}}
\newcommand{\Wstar}{\ensuremath{W^{*}}}
\newcommand{\cond}{\ensuremath{\kappa}}
\newcommand{\ynoclip}[1]{\ensuremath{\bar{y}^{{#1}}}}
\newcommand{\ones}{\ensuremath{1\!\!1}}
\newcommand{\zitermat}[1]{\ensuremath{\tilde Y^{{#1}}}}
\newcommand{\wtil}{\ensuremath{\widetilde{w}}}
\newcommand{\Wtil}{\ensuremath{\widetilde{W}}}
\newenvironment{proof-of-theorem}[1][{}]{\noindent{\bf Proof of Theorem {#1}}
  \hspace*{1em}}{\qed\smallskip\\}
\DeclareMathOperator*{\argmin}{arg\,min}
\author{
PRE-PRINT: PLEASE DO NOT DISTRIBUTE
\AND
    Alekh Agarwal\thanks{ Use footnote for providing further information
about author (webpage, alternative address)---\emph{not} for acknowledging
funding agencies.} \\
Microsoft Research\\
New York, NY \\
\texttt{alekha@microsoft.com} \\
\And
Sham M. Kakade\\
Microsoft Research\\
Cambridge, MA\\
\texttt{skakade@microsoft.com} \\
\And
Nikos Karampatziakis\\
Microsoft Cloud and Information Services Lab. \\
Redmond, WA 98052 \\
\texttt{nikosk@microsoft.com} \\
\And
Le Song\\
College of Computing, Georgia Tech \\
Atlanta, Georgia \\
\texttt{lsong@cc.gatech.edu} \\
\And
Gregory Valiant\\
Microsoft Research\\
Cambridge, MA\\
\texttt{gregory.valiant@gmail.com}\\
}
\icmltitlerunning{Least Squares Revisited}
\begin{document}

\twocolumn[
  
\icmltitle{Least Squares Revisited: \\
Scalable Approaches for Multi-class Prediction}

\icmlauthor{Alekh Agarwal}{alekha@microsoft.com}
\icmladdress{Microsoft Research New York, NY}
\icmlauthor{Sham M. Kakade}{skakade@microsoft.com}
\icmladdress{Microsoft Research Cambridge, MA}
\icmlauthor{Nikos Karampatziakis}{nikosk@microsoft.com}
\icmladdress{Microsoft Cloud and Information Services Lab. Redmond, WA 98052}
\icmlauthor{Le Song}{lsong@cc.gatech.edu}
\icmladdress{College of Computing, Georgia Tech, Atlanta, Georgia}
\icmlauthor{Gregory Valiant}{valiant@stanford.edu}
\icmladdress{Computer Science Department Stanford University}

\vskip 0.3in
]

\begin{abstract}
    This work provides simple algorithms for multi-class (and
    multi-label) prediction in settings where both the number of
    examples $n$ and the data dimension $d$ are relatively
    large. These robust and parameter free algorithms are essentially
    iterative least-squares updates and very versatile both in theory
    and in practice. On the theoretical front, we present several
    variants with convergence guarantees. Owing to their effective use
    of second-order structure, these algorithms are substantially
    better than first-order methods in many practical scenarios. On
    the empirical side, we present a scalable stagewise variant of our
    approach, which achieves dramatic computational speedups over
    popular optimization packages such as Liblinear and Vowpal Wabbit
    on standard datasets (MNIST and CIFAR-10), while attaining
    state-of-the-art accuracies.
\end{abstract}

\section{Introduction}

\setlength{\abovedisplayskip}{4pt}
\setlength{\abovedisplayshortskip}{1pt}
\setlength{\belowdisplayskip}{4pt}
\setlength{\belowdisplayshortskip}{1pt}
\setlength{\jot}{3pt}

\setlength{\textfloatsep}{3ex}

The aim of this paper is to develop robust and scalable algorithms for
multi-class classification problems with $k$ classes, where the number
of examples $n$ and the number of features $d$ is simultaneously quite
large. Typically, such problems have been approached by the
minimization of a convex surrogate loss, such as the multiclass
hinge-loss or the multiclass logistic loss, or reduction to convex
binary subproblems such as one-versus-rest. Given the size of the
problem, (batch or online) first-order methods are typically the
methods of choice to solve these underlying optimization
problems. First-order updates, usually linear in the dimension in
their computational complexity, easily scale to large $d$. To deal
with the large number of examples, online methods are particularly
appealing in the single machine setting, while batch methods are often
preferred in distributed settings.

Empirically however, these first-order approaches are often found to
be lacking. Many natural high-dimensional data such as images, audio,
and video typically result in ill-conditioned optimization
problems. While each iteration of a first-order method is fast, the
number of iterations needed unavoidably scale with the condition
number of the data matrix \citep{NemYu83}, even for simple generalized
linear models (henceforth GLM). Hence, the convergence of these
methods is still rather slow on many real-world datasets with decaying
spectrum.

A natural alternative in such scenarios is to use second-order
methods, which are robust to the conditioning of the
data. In this paper, we present simple second-order methods for
multiclass prediction in GLMs. Crucially, the methods are parameter
free, robust in practice and admit easy extensions. As an example, we
show a more sophisticated variant which learns the unknown link
function in the GLM simultaneously with the weights. Finally, we also
present a practical variant to tackle the difficulties typically
encountered in applying second-order methods to high-dimensional
problems. We develop a block-coordinate descent style stagewise
regression procedure that incrementally solves least-squares problems
on small batches of features. The result of this overall development
is a suite of techniques that are simple, versatile and substantially
faster than several other state-of-the-art optimization methods.

{\bf Our Contributions: } Our work has three main
contributions. Working in the GLM framework: $ \E[y \mid x] = g(W x)
$, where $y$ is a \emph{vector} of predictions, $W$ is the weight
matrix, and $g$ is the vector valued link function, we present a
simple second-order update rule. The update is based on a majorization
of the Hessian, and uses a scaled version of the empirical second
moment $\frac1n \sum_i x_i x_i^T$ as the preconditioner. Our algorithm
is parameter-free and does not require a line search for
convergence. Furthermore our computations only involve a $d\times d$
matrix unlike IRLS and other Hessian related approaches where matrices
are $\order(dk\times dk)$ for multiclass problems\footnote{This is a
  critical distinction as we focus on tasks involving increasingly
  complex class hierarchies, particularly in the context of computer
  vision problems.}. Theoretically, the proposed method enjoys an
iteration complexity independent of the condition number of the data
matrix as an immediate observation.

We extend our algorithm to simultaneously estimate the weights as well
as the link function in GLMs under a parametric assumption on the link
function, building on ideas from isotonic regression~\cite{KS09,
  KakadeKKS11}. We provide a global convergence guarantee for this
algorithm despite the non-convexity of the problem. To the best of our
knowledge, this is the first work to formulate and address the problem
of isotonic regression in the multiclass classification
setting. Practically this enables, for example, the use of our current
predictions as features in order to improve the predictions in
subsequent iterations. Similar procedures are common for binary SVMs
~\cite{Platt99probabilisticoutputs} and for
re-ranking\cite{collins2000discriminative}.

Both the above algorithms, despite being metric free, still scale
somewhat poorly with the dimensionality of the problem owing to the
quadratic cost of the representation and updates. To address this
problem, we take a cue from ideas in block-coordinate descent and
stagewise regression literature. Specifically, we choose a subset of
the features and perform one of the above second-order updates on that
subset only. We then repeat this process, successively fitting the
residuals. We demonstrate excellent empirical performance of this
procedure on two tasks: MNIST and CIFAR-10. In settings where the
second order information is relevant, such as MNIST and CIFAR-10, we
find that stagewise variants can be highly effective, providing orders
of magnitude speed-ups over online methods and other first-order
approaches. This is particularly noteworthy since we compare a simple
MATLAB implementation of our algorithms with sophisticated C software
for the alternative approaches. In contrast, for certain text problems
where the data matrix is well conditioned, online methods are highly
effective. Notably, we also achieve state of the art accuracy results
on MNIST and CIFAR-10, outperforming the ``dropout'' neural
net~\cite{hinton2012improving}, where our underlying optimization
procedures are entirely based on simple least squares
approaches. These promising results highlight that this is a fruitful
avenue for the development of further theory and algorithms, which we
leave for future work.

{\bf Related Work:} There is much work on scalable algorithms for
large, high-dimensional datasets. A large chunk of this work builds on
and around online learning and stochastic optimization,
leveraging the ability of these algorithms to ensure a very rapid
initial reduction of test error (see
e.g.~\cite{bottou08tradeoff,Shalev-Shwartz12}). These methods can be
somewhat unsuited though, when optimization to a relatively high
precision is desired, for example, when the data matrix is
ill-conditioned and small changes in the parameters can lead to large
changes in the outputs. This has led to interesting works on hybrid
methods that interpolate between an initial online and subsequent
batch behavior~\cite{ShwartzZh2012, LeRouxScBa12}. There has also been
a renewed interest in Quasi-Newton methods scalable to statistical
problems using stochastic approximation
ideas~\cite{ByrdChNeNo11,bordes-bottou-gallinari-2009}. High-dimensional
problems have also led to natural consideration of block coordinate
descent style procedures, both in serial~\cite{Nesterov12} and
distributed~\cite{RichtarikTa12,RechtReWrNi11} settings. Indeed, in
some of our text experiments, our stagewise procedure comes
quite close to a block-coordinate descent type update.  There are also
related approaches for training SVMs that extract the most information
out of a small subset of data before moving to the next
batch \cite{Chapelle2007,MatsushimaViSm12, YuHsChLi12}.

On the statistical side, our work most directly generalizes past works
on learning in generalized linear models for binary classification,
when the link function is known or unknown~\cite{KS09,KakadeKKS11}. A
well-known case where squared loss was used in conjunction with a
stagewise procedure to fit binary and multi-class GLMs is the gradient
boosting machine~\cite{friedman2001greedy}. In the statistics
literature, the iteratively reweighed least squares algorithm (IRLS)
is the workhorse for fitting GLMs and also
works by recasting the optimization problem to a series of least
squares problems. However, IRLS can (and does in some cases) diverge, while the
proposed algorithms are guaranteed to make progress on each
iteration. Another difficulty with IRLS (also shared by some
majorization algorithms e.g.,~\cite{jebara2012majorization}) is that
each iteration needs to work with a new Hessian since it depends on
the parameters. In contrast, our algorithms use the same matrix
throughout their run.

\iffalse
\begin{enumerate}
\item online analysis
\item isotron
\item Still need some stagewise regression references, I couldn't
  think of a good one beyond the BCD type references
\item We should probably have more stats references too, but I can't
  think of what, perhaps things like boosting and OMP?
\end{enumerate}
\fi

\section{Setting and Algorithms}

\iffalse
We now describe our setting and corresponding
algorithms. All our algorithms
are close variants of least-squares minimization problems,
based on generalized linear models (henceforth GLM).
\fi

We begin with the simple case of binary GLMs, 
before addressing the more challenging multi-class setting.

\subsection{Warmup: Binary GLMs}

The canonical definition of a GLM in binary
classification (where $y\in \{0,1\}$) setup posits the probabilistic model
\begin{equation}
  \E[y \mid x] = \linkopt({\wopt}^T x),
  \label{eqn:glm-binary}
\end{equation}
where $\linkopt~:~ \R \mapsto \R$ is a monotone increasing function,
and $\wopt \in \R^d$. To facilitate the development of better
algorithms, assume that
$\linkopt$ is a $L$-Lipschitz function of its univariate argument.
Since $\linkopt$ is a monotone increasing univariate function, there
exists a convex function $\linkloss~:~ \R \mapsto \R$ such that
$\linkloss' = \linkopt$. Based on this convex function, let us
define a convex loss function.

\begin{definition}[Calibrated loss]
  Given the GLM~\eqref{eqn:glm-binary}, define
  the associated convex loss
\begin{equation}
  \ell(w;(x,y)) = \linkloss(w^T x) - yw^T x.
  \label{eqn:glm-binary-loss}
\end{equation}
\end{definition}

Up to constants independent of $w$, this definition yields the
least-squares loss for the identity link function, $\link(u) = u$, and
the logistic loss for the logit link function, $\link(u) =
e^u/(1+e^u)$. The loss is termed calibrated: for each $x$,
minimizing the above loss yields a consistent estimate of the weights
$\wopt$.  Precisely,

\begin{lemma}
  Suppose $g$ is a monotone function and that Eq.~\eqref{eqn:glm-binary}
  holds. Then ${\wopt}$ is a minimizer of $\E[\ell(w;(x,y))]$, where
  the expectation is with respect to the distribution on $x$ and
  $y$. Furthermore, any other minimizer $\tilde w$ of
  $\E[\ell(w;(x,y))]$ (if one such exists) also satisfies $ \E[y \mid
  x] = \linkopt({\tilde w}^T x)$.
\end{lemma}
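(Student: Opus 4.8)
The plan is to verify first-order optimality of $\wopt$ for the convex objective $L(w) := \E[\ell(w;(x,y))]$, and then to use convexity together with the GLM relation to characterize all minimizers. First I would compute the gradient: since $\ell(w;(x,y)) = \linkloss(w^Tx) - yw^Tx$ and $\linkloss' = \linkopt$, we have $\nabla_w \ell(w;(x,y)) = \linkopt(w^Tx)\,x - y\,x = \bigl(\linkopt(w^Tx) - y\bigr)x$. Under integrability, $\nabla L(w) = \E\bigl[(\linkopt(w^Tx) - y)x\bigr]$. Evaluating at $w = \wopt$ and conditioning on $x$, the tower property gives $\E\bigl[(\linkopt({\wopt}^Tx) - y)x\bigr] = \E\bigl[x\,\E[\linkopt({\wopt}^Tx) - y \mid x]\bigr] = \E\bigl[x\,(\linkopt({\wopt}^Tx) - \E[y\mid x])\bigr] = 0$ by Eq.~\eqref{eqn:glm-binary}. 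Since $\linkloss$ is convex, $\ell(\cdot;(x,y))$ is convex in $w$, hence $L$ is convex, so a stationary point is a global minimizer; this proves $\wopt \in \argmin_w L(w)$.

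For the second claim, let $\wtil$ be any other minimizer. The key step is to rewrite $L(w) - L(\wopt)$ in a form that exposes a Bregman-divergence-like nonnegative term which must vanish at $\wtil$. Writing $D_\linkloss(a,b) := \linkloss(a) - \linkloss(b) - \linkopt(b)(a-b)$ for the (nonnegative, by convexity of $\linkloss$) Bregman divergence of $\linkloss$, I would show
\begin{equation}
  L(\wtil) - L(\wopt) = \E\bigl[D_\linkloss(\wtil^Tx,\,{\wopt}^Tx)\bigr],
\end{equation}
using the optimality condition $\E[(\linkopt({\wopt}^Tx)-y)x] = 0$ to cancel the cross terms (the $-yw^Tx$ contributions and the linearization term combine, and the expectation of $(\linkopt({\wopt}^Tx) - y)(\wtil - \wopt)^Tx$ vanishes). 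Since the left side is $0$ and the integrand is pointwise nonnegative, $D_\linkloss(\wtil^Tx,{\wopt}^Tx) = 0$ almost surely.

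Finally I would translate this back into a statement about the link. Because $\wtil$ is also a minimizer, its own stationarity gives $\E[(\linkopt(\wtil^Tx) - y)x] = 0$, i.e. $\E[x\,(\linkopt(\wtil^Tx) - \E[y\mid x])] = 0$. Combined with the first-order condition for $\wopt$, this yields $\E\bigl[x\,(\linkopt(\wtil^Tx) - \linkopt({\wopt}^Tx))\bigr] = 0$; pairing this with $\wtil - \wopt$ shows $\E\bigl[(\linkopt(\wtil^Tx) - \linkopt({\wopt}^Tx))(\wtil - \wopt)^Tx\bigr] = 0$, and since $\linkopt$ is monotone each summand has the sign of $(\wtil^Tx - {\wopt}^Tx)$ times $(\linkopt(\wtil^Tx)-\linkopt({\wopt}^Tx))$, hence is nonnegative, forcing $\linkopt(\wtil^Tx) = \linkopt({\wopt}^Tx)$ almost surely. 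Therefore $\E[y\mid x] = \linkopt({\wopt}^Tx) = \linkopt(\wtil^Tx)$ as claimed. (Alternatively, this last step follows directly from $D_\linkloss(\wtil^Tx,{\wopt}^Tx)=0$ a.s. when $\linkloss$ is strictly convex; for merely convex $\linkloss$ the monotonicity argument above is the clean route.)

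The main obstacle is the second part: a priori $\wtil^Tx$ need not equal ${\wopt}^Tx$ (the design may be degenerate, or $\linkopt$ flat on some interval), so one cannot conclude $\wtil = \wopt$ — only that the two induce the same conditional mean. Getting the bookkeeping right so that the cross terms cancel using \emph{only} the first-order optimality conditions (rather than any strict convexity) is the delicate part; invoking monotonicity of $\linkopt$ to kill the remaining nonnegative term is what makes the argument go through without extra assumptions.
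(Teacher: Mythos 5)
Your proposal is correct. For the first claim it is essentially the paper's argument: you verify the first-order condition at $\wopt$ via the tower property and invoke convexity, whereas the paper does the same computation conditionally on $x$ (establishing that $\wopt$ minimizes $\E[\ell(w;(x,y))\mid x]$ pointwise, which is slightly stronger and is then reused in part two). For the second claim your route is genuinely different in mechanics. The paper argues by contradiction: it partitions over the set $\tilde S = \{x : \E[y\mid x] = \linkopt(\tilde w^Tx)\}$, notes that on $\tilde S$ both weight vectors achieve the pointwise conditional minimum, and that off $\tilde S$ the conditional loss of $\tilde w$ is strictly larger, so $\Pr(x\notin\tilde S)>0$ would contradict optimality of $\tilde w$. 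You instead write the excess risk exactly as $\E[D_\linkloss(\tilde w^Tx,\wopt^Tx)]$ (which is in fact the integrated form of the paper's pointwise gap), conclude the Bregman term vanishes a.s., and then use stationarity of \emph{both} minimizers plus monotonicity of $\linkopt$ to force $\linkopt(\tilde w^Tx)=\linkopt(\wopt^Tx)$ a.s.; the step from $(\linkopt(a)-\linkopt(b))(a-b)=0$ to $\linkopt(a)=\linkopt(b)$ is valid since the case $a=b$ is trivial. Your version is a direct argument that avoids the case split and makes explicit exactly which hypotheses are used where (first-order conditions for the cancellation, monotonicity for the sign); the paper's version avoids needing the stationarity of $\tilde w$ as a separate ingredient, since strict suboptimality off $\tilde S$ already follows from the nonvanishing conditional gradient. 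You correctly flag the main subtlety — one cannot conclude $\tilde w=\wopt$ or even $\tilde w^Tx=\wopt^Tx$, only equality of the induced conditional means — which is precisely how the lemma is stated.
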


\begin{proof}
First, let us show that ${\wopt}$ is a pointwise minimizer of
$\E[\ell(w;(x,y))|x]$ (almost surely).
$\E[\ell(w;(x,y))]$.  Observe for any point $x$,
\begin{align} \label{eqn:consistent}
  \nonumber &\E[\nabla \ell(w;(x,y)) \mid x] = \E[\nabla
    \linkloss( {w}^T x) - xy | x]\\ 
  &\qquad \qquad =
  \E[\linkopt({w}^T x) x \mid x] - \linkopt({\wopt}^T x) x 
\end{align}
where the second equality follows since $\linkloss' = \linkopt$ and $\E[y
  \mid x] = \linkopt(\wopt x)$ by the probabilistic
model~\eqref{eqn:glm-binary}. Hence, ${\wopt}$ is a global minimizer (since
the loss function is convex).

Now let us show that any other minimizer $\tilde w$ (if one exists)
also satisfies $ \E[y \mid x] = \linkopt({\tilde w}^T x)$.  Let
$\tilde S$ be the set of $x$ such that $\E[y \mid x] =
\linkopt({\tilde w}^T x)$. It suffices to show $\Pr(x\notin \tilde S)=0$.
Suppose this is not the case. We then have:
\begin{eqnarray}\nonumber
&&\E[\ell(\tilde w;(x,y))] \\ 
&=& \Pr(x\in \tilde S)  \ \E[  \ell(\tilde w;(x,y))|x\in \tilde S] \\
&&+
\Pr(x\notin \tilde S)  \  \E[  \ell(\tilde w;(x,y))|x\notin \tilde S] \\
&=& \Pr(x\in \tilde S)  \ \E[  \ell({\wopt};(x,y))|x\in \tilde S] \\
&&+
\Pr(x\notin \tilde S)  \  \E[  \ell(\tilde w;(x,y))|x\notin \tilde S] \\
&>& \Pr(x\in \tilde S)  \ \E[  \ell({\wopt};(x,y))|x\in \tilde S] \\
&&+
\Pr(x\notin \tilde S)  \  \E[  \ell({\wopt};(x,y))|x\notin \tilde S] \\
&=&\E[\ell({\wopt};(x,y))] \\ 
\end{eqnarray}
where the second equality follows by ~\eqref{eqn:consistent} and the
inequality follows since for $x\notin \tilde S$, $\E[\ell(\tilde
w;(x,y))|x] > \E[\ell({\wopt};(x,y))|x]$ (again by
~\eqref{eqn:consistent}, since ${\wopt}$ is a minimizer of
$\E[\ell(w;(x,y))|x]$, almost surely). This contradicts the optimality of
$\tilde w$.
\end{proof}
As another intuition, this loss
corresponds to the negative log-likelihood when the GLM specifies an
exponential family with the sufficient statistic $y$. Similar
observations have been noted for the binary case in some prior works
as well (see ~\citet{KakadeKKS11,RavikumarWaYu2008}). Computing the
optimal $\wopt$ simply amounts to using any standard convex
optimization procedure. We now discuss these choices in the context of
multi-class prediction.

\iffalse
The proof is provided in the appendix, and follows from the following
observation. For any point $x$, the following zero (pointwise) derivative condition
(in expectation):
\begin{align} \label{eqn:consistent}
  \nonumber &\E[\nabla \ell(\wopt;(x,y)) \mid x] = \E[\nabla
    \linkloss( {\wopt}^T x) - xy | x]\\ 
  &\qquad \qquad \stackrel{(a)}{=}
  \E[\linkopt({\wopt}^T x) x \mid x] - \linkopt({\wopt}^T x) x = 0, 
\end{align}
%
where the equality (a) follows since $\linkloss' = \linkopt$ and $\E[y
  \mid x] = \linkopt(\wopt x)$ by the probabilistic
model~\eqref{eqn:glm-binary}.  
\fi

\subsection{Multi-class GLMs and Minimization Algorithms}

The first question in the multi-class case concerns the definition of
a generalized linear model; monotonicity is not immediately extended
in the multi-class setting. Following the definition in the recent
work of~\citet{Agarwal2013}, we extend the binary case by defining the
model:
\begin{equation}
  \E[y \mid x] = \nabla \linkloss(\Wopt x)  := g(\Wopt x) \,
  \label{eqn:glm-multi}
\end{equation}
where $\Wopt \in \R^{k\times d}$ is the weight matrix, $\linkloss~:~
\R^k\mapsto \R$ is a proper and convex lower semicontinuous function
of $k$ variables and $y \in \R^k$ is a vector with 1 for the correct
class and zeros elsewhere. This definition essentially corresponds to
the link function $g=\nabla \linkloss$ satisfying (maximal and
cyclical) monotonicity~\cite{Rockafellar66} (natural extensions of
monotonicity to vector spaces). Furthermore, when the
GLM~\eqref{eqn:glm-multi} corresponds to an exponential family with
sufficient statistics $y$, then $\linkloss$ corresponds to the
log-partition function like the binary case, and is always
convex~\cite{Lauritzen}.

This formulation immediately yields an analogous
definition for a calibrated multi-class loss.
\begin{definition}[Calibrated multi-class loss]
  Given the GLM~\eqref{eqn:glm-multi}, define
  the associated convex loss
  \begin{equation}
    \ell(W;(x,y)) = \linkloss(Wx) - y^T Wx.
    \label{eqn:glm-multi-loss}
  \end{equation}
\end{definition}
Observe that we obtain the
multi-class logistic loss, when the probabilistic
model~\eqref{eqn:glm-multi} is a multinomial logit model.

The loss function is convex as before. It is Fisher
consistent: the minimizer of the expected loss is $\Wopt$ (as in
Equation~\ref{eqn:consistent}). In particular,

\begin{lemma}
Suppose $\linkloss~:~
\R^k\mapsto \R$ is a (proper and lower semicontinuous) convex function
and that Eq.~\eqref{eqn:glm-multi} holds.
Then ${\Wopt}$ is a minimizer of
  $\E[\ell(W;(x,y))]$, where the expectation is with respect to the
  distribution on $x$ and $y$. Furthermore, any other minimizer
  $\tilde W$ of $\E[\ell(W;(x,y))]$ (if one such exists) also satisfies
  $ \E[y \mid x] = \linkopt({\tilde w}^T x)$.
\end{lemma}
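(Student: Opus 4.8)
The plan is to mirror the binary proof almost verbatim, since the multi-class calibrated loss $\ell(W;(x,y)) = \linkloss(Wx) - y^\t Wx$ has exactly the same structure as the binary one, with the scalar derivative $\linkloss' = \link$ replaced by the gradient $\nabla\linkloss = g$. First I would compute the pointwise (conditional) gradient of the loss: for any fixed $x$,
\begin{align*}
  \E[\nabla_W \ell(W;(x,y)) \mid x]
  &= \E[\nabla\linkloss(Wx)\,x^\t - y x^\t \mid x] \\
  &= g(Wx)\,x^\t - \E[y\mid x]\,x^\t \\
  &= \left(g(Wx) - g(\Wopt x)\right) x^\t,
\end{align*}
using linearity of the matrix derivative of $W\mapsto \linkloss(Wx)$ (chain rule: the gradient is the outer product $\nabla\linkloss(Wx)\,x^\t$) and the GLM assumption~\eqref{eqn:glm-multi} that $\E[y\mid x] = g(\Wopt x)$. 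Evaluating at $W = \Wopt$ makes this vanish pointwise, and since $W\mapsto \ell(W;(x,y))$ is convex (composition of the convex $\linkloss$ with the linear map $W\mapsto Wx$, minus a linear term), a zero gradient in expectation certifies that $\Wopt$ is a global minimizer of $\E[\ell(W;(x,y))]$.

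For the second claim, I would reproduce the contradiction argument from the binary lemma's proof. Let $\tilde W$ be any other minimizer and let $\tilde S = \{x : \E[y\mid x] = g(\tilde W x)\}$; the goal is $\Pr(x\notin\tilde S) = 0$. Suppose not. Conditioned on $x\in\tilde S$, the conditional gradient computation above (now with $\tilde W$ in place of $\Wopt$) shows $\tilde W$ achieves the same conditional expected loss as $\Wopt$ — more precisely, on $\tilde S$ both $\tilde W$ and $\Wopt$ are conditional minimizers, so $\E[\ell(\tilde W;(x,y))\mid x] = \E[\ell(\Wopt;(x,y))\mid x]$ for a.e. $x\in\tilde S$ (strict convexity is not needed here, only that the conditional minimum value is well-defined and both attain it). Conditioned on $x\notin\tilde S$, we have $\E[\ell(\tilde W;(x,y))\mid x] > \E[\ell(\Wopt;(x,y))\mid x]$, because $\Wopt$ is a pointwise minimizer of the convex conditional loss and $\tilde W$ does not satisfy the first-order condition there. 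Decomposing $\E[\ell(\tilde W;(x,y))]$ over the events $x\in\tilde S$ and $x\notin\tilde S$ and substituting then yields $\E[\ell(\tilde W;(x,y))] > \E[\ell(\Wopt;(x,y))]$, contradicting that $\tilde W$ is a minimizer. Hence $\Pr(x\notin\tilde S)=0$, i.e., $\E[y\mid x] = g(\tilde W x)$ almost surely. (The statement's ``$\linkopt({\tilde w}^T x)$'' should read $g(\tilde W x)$; I would fix this typo in passing.)

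The main obstacle — and it is minor — is being careful about exactly which convexity properties are invoked when claiming $\tilde W$ is a conditional minimizer on $\tilde S$. The clean way is to note directly that on $\tilde S$ the conditional gradient $\E[\nabla_W\ell(\tilde W;(x,y))\mid x] = (g(\tilde W x) - \E[y\mid x])x^\t = 0$ by the definition of $\tilde S$, so $\tilde W$ is a conditional minimizer there by convexity, exactly as $\Wopt$ is; thus both attain the same conditional minimum value and the strict inequality off $\tilde S$ drives the contradiction. This is precisely the binary argument, so the write-up can follow~\eqref{eqn:consistent} and the displayed chain of (in)equalities in the previous proof with only notational changes; no new ideas are required beyond the chain rule $\nabla_W \linkloss(Wx) = g(Wx)\,x^\t$ and the fact that $\linkloss$ convex implies $W\mapsto\linkloss(Wx)$ convex.
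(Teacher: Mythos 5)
Your proposal is correct and follows exactly the route the paper intends: the paper's own proof of this lemma is literally the statement ``the proof is identical to that of before,'' referring to the binary case, and your argument is precisely that binary proof (pointwise vanishing of the conditional gradient $\E[\nabla_W \ell(W;(x,y))\mid x] = (g(Wx)-g(\Wopt x))x^\t$ at $\Wopt$ plus convexity, then the decomposition over $\tilde S$ and its complement to force the contradiction) transcribed with the chain rule $\nabla_W\linkloss(Wx)=g(Wx)x^\t$. If anything, your explicit remark that the strict inequality off $\tilde S$ comes from $\tilde W$ failing the first-order condition there is slightly more careful than the paper's wording, and your note that the lemma's conclusion should read $g(\tilde W x)$ is a correct catch of a typo.
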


The proof is identical to that of before. Again, convexity only
implies that all local minimizers are global minimizers.

As before, existing convex optimization algorithms can be utilized to
estimate the weight matrix $W$. First-order methods applied to the
problem have per-iteration complexity of $\order(dk)$, but can require
a large number of iterations as discussed before. Here, the difficulty
in utilizing second-order approaches is that the Hessian matrix is of
size $dk \times dk$ (e.g. as in IRLS, for logistic regression); any
direct matrix inversion method is now much more computationally expensive
even for moderate sized $k$.

Algorithm~\ref{alg:glm-multi-full} provides a simple variant of least
squares regression --- which repeatedly fits the residual error ---
that exploits the second order structure in $x$. Indeed, as shown in
the appendix, the algorithm uses a block-diagonal upper bound on the
Hessian matrix in order to preserve the correlations between the
covariates $x$, but does not consider interactions across the
different classes to have a more computationally tractable update. The
algorithm has several attractive properties. Notably, (i) the
algorithm is parameter free\footnote{Here and below we refer to
  parameter free algorithms from the point of view of optimization: no
  learning rates, backtracking constants etc. The overall learning
  algorithms may still require setting other parameters, such as the
  regularizer.} and (ii) the algorithm only inverts a $d\times d$
matrix. Furthermore, this matrix is independent of the weights $W$
(and the labels) and can be computed only once ahead of time. In that
spirit, the algorithm can also be viewed as \emph{preconditioned
  gradient descent}, with a block diagonal preconditioner whose
diagonal blocks are identical and equal to the matrix $\covhat^{-1}$.
At each step, we utilize the residual error $\hat\E[(\hat y - y)
  x^T]$, akin to a gradient update on least-squares loss.  Note the
``stepsize'' here is determined by $L$, a parameter entirely dependent
on the loss function and not on the data. For the case of logistic
regression, simply $L=1$ satisfies this Lipchitz
constraint\footnote{Using Gershgorin's circle theorem it is possible
  to show that $L=1/2$ still leads to a valid upper bound on the
  Hessian.  This is tight and achieved by an example whose class
  probabilities under the current model are $(1/2,1/2,0,0,\ldots)$.}.
Also observe that for the square loss, where $L=1$, the generalized
least squares algorithm reduces to least squares (and terminates in
one iteration).

%Let us discuss some potential merits of this class of algorithms. As
%pointed out in the introduction, the matrix $\covhat^{-1}$ can be just
%thought of as a fixed preconditioner, which critically, is of size
%$d\times d$ and not $dk\times dk$.

%. This issue is explored in more detail in
%our experiments.

\begin{algorithm}[t]
\begin{algorithmic}
  \REQUIRE Initial weight matrix $W_0$, data $\{(x_i, y_i)\}$,
  Lipschitz constant $L$, link $g=\nabla \linkloss$.
\vspace{0.1in}
  \STATE Define the (vector valued) predictions $\yiter{t}_i = g(W_tx_i)$
  and the empirical expectations:
\begin{align*}
&\covhat = \hat\E [x_i x_i^T] =\frac{1}{n} \sum_{i=1}^n x_i x_i^T \, \\
&\hat\E[(\yiter{t} - y) x^T] = \frac{1}{n} \sum_{i=1}^n
(\yiter{t}_i - y_i) x_i^T
\end{align*}
%  \STATE Define $\covhatblock \in \R^{dk\times dk}$ to be a block-diagonal
%  matrix with $\covhat$ in each $d\times d$ block.
  \REPEAT
  \STATE Update the weight matrix $W_t$:
  \begin{equation}
    W_{t+1}^T = W_t^T - \frac{1}{L} \ \covhat^{-1} \ \hat\E[(\yiter{t} - y) x^T]
    \label{eqn:glm-multi-newton}
  \end{equation}
 \UNTIL{convergence}
\end{algorithmic}
\caption{Generalized Least Squares}
\label{alg:glm-multi-full}
\end{algorithm}

%The proof of correctness (provided in the Appendix) rests on utilizing
%an upper bound on the Hessian.
%Importantly, note that the $\Sigma$
%used in the algorithm is not a function of $W_t$ (while the Hessian
%does depend on $W_t$, and the Hessian matrix is of size $kd \times
%kd$).

We now describe the convergence properties of
Algorithm~\ref{alg:glm-multi-full}. The results are stated in terms of
the sample loss
\begin{equation}
  \lossn(w) = \frac{1}{n} \sum_{i=1}^n \ell(W;(x_i, y_i)).
  \label{eqn:lossn}
\end{equation}
The following additional assumptions regarding the link
function $\nabla \linkloss$ are natural for characterizing convergence
rates. Assuming that the link function $g=\nabla \linkloss$ is $L$-Lipschitz
amounts to the condition
\begin{equation}
  \norm{g(u) - g(v)}_2 \leq L \norm{u -
    v}_2, ~\mbox{where}~u,v \in \R^k.
  \label{eqn:lipschitz-multi}
\end{equation}
If we want a linear convergence rate, we must further assume
%in some results that the link function $g=\nabla \linkloss$ is
$\mu$-strong monotonicity, meaning for all $u,v \in \R^k$:
\begin{equation}
  \ip{g(u) - g(v)}{u-v} \geq \mu \norm{u - v}_2^2.
  \label{eqn:multi-link-strong}
\end{equation}

%The following bounds the convergence rate of our
%updates~\eqref{eqn:glm-multi-newton}.
% where the convergence is to
%our results in convergence to

\begin{theorem}
Define $\Wstar = \arg\min_W \lossn(W)$.  Suppose that the link
function $\nabla \linkloss$ is
$L$-Lipschitz~\eqref{eqn:lipschitz-multi}. Using the generalized Least
Squares updates (Algorithm~\ref{alg:glm-multi-full}) with $W_0 = 0$,
then for all $t = 1,2,\ldots$
  \begin{equation*}
    \lossn(W_t) - \lossn(\Wstar) \leq \frac{2L \norm{\Wstar}^2}{t+4}.
  \end{equation*}
  If, in addition, the link function is $\mu$-strongly
  monotone~\eqref{eqn:multi-link-strong} and let $\cond_\linkloss =
  L/\mu$. Then
  \begin{equation*}
    \lossn(W_t) - \lossn(\Wstar) \leq \frac{L}{2}
    \left(\frac{\cond_{\linkloss} - 1}{\cond_{\linkloss} + 1}
    \right)^t \norm{\Wstar}_F^2.
  \end{equation*}
  \label{thm:glm-multi-newton}
\end{theorem}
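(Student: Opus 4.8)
The plan is to recognize update~\eqref{eqn:glm-multi-newton} as the exact minimization of a quadratic majorant of $\lossn$, and then to run the textbook convergence analysis of gradient descent for smooth (and, for the second bound, strongly convex) objectives, but carried out in the inner product induced by $\covhat$ --- which is precisely what makes the rates depend on $\cond_\linkloss$ rather than on the conditioning of the data. Concretely, the residual $\hat\E[(\hat y - y)x^\t]$ appearing in~\eqref{eqn:glm-multi-newton} is, up to transposition, the Frobenius gradient of the sample loss, $\nabla\lossn(W) = \tfrac1n\sum_i (g(W x_i) - y_i)x_i^\t$. Introducing the (semi)norm $\norm{A}_{\covhat}^2 := \operatorname{tr}(A\covhat A^\t) = \operatorname{vec}(A)^\t(\I_k\otimes\covhat)\operatorname{vec}(A)$ with dual $\norm{\cdot}_{\covhat,*}$ (defined through $\covhat^{-1}$, or a pseudo-inverse if $\covhat$ is rank deficient), a one-line first-order-optimality computation shows that the iteration equals
\[
  W_{t+1} = \argmin_{W} \Big\{ \lossn(W_t) + \ip{\nabla\lossn(W_t)}{W - W_t} + \tfrac{L}{2}\,\norm{W - W_t}_{\covhat}^2 \Big\},
\]
i.e. a preconditioned gradient step; equivalently, after the change of variables $V = W\covhat^{1/2}$ restricted to $\range(\covhat)$, it is vanilla gradient descent with step $1/L$. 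Since $W_0 = 0$, every update stays in $\range(\covhat)$, and $\lossn$ depends on $W$ only through $\{Wx_i\}$, I may take $\Wstar$ to be the minimizer lying in that subspace.

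The second step is to verify that this quadratic is a genuine upper bound: $\lossn(W) \le \lossn(W_t) + \ip{\nabla\lossn(W_t)}{W-W_t} + \tfrac{L}{2}\norm{W - W_t}_{\covhat}^2$ for every $W$. In the vectorized representation the per-example Hessian factors as $\nabla^2\ell(W;(x,y)) = \nabla^2\linkloss(Wx)\otimes(xx^\t)$; monotonicity and $L$-Lipschitzness of $g = \nabla\linkloss$ give $0 \preceq \nabla^2\linkloss \preceq L\,\I_k$, hence $\nabla^2\ell(W;(x,y)) \preceq L(\I_k\otimes xx^\t)$, and averaging over the sample, $\nabla^2\lossn(W) \preceq L(\I_k\otimes\covhat)$. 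This is precisely the block-diagonal upper bound on the Hessian alluded to in the text; integrating it twice produces the claimed majorant. (If $\linkloss$ is not twice differentiable, one substitutes the equivalent integral bound implied directly by~\eqref{eqn:lipschitz-multi}.)

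With a valid majorant whose exact minimizer is $W_{t+1}$, the remaining step is the standard gradient-descent recursion, run in the $\covhat$-geometry. Two facts hold: sufficient decrease, $\lossn(W_{t+1}) \le \lossn(W_t) - \tfrac1{2L}\norm{\nabla\lossn(W_t)}_{\covhat,*}^2$; and, using convexity together with the monotonicity of $t \mapsto \norm{W_t - \Wstar}_{\covhat}$ for step $1/L$, $\delta_t := \lossn(W_t) - \lossn(\Wstar) \le \ip{\nabla\lossn(W_t)}{W_t - \Wstar} \le R\,\norm{\nabla\lossn(W_t)}_{\covhat,*}$ with $R := \norm{W_0 - \Wstar}_{\covhat} = \norm{\Wstar}_{\covhat}$. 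Combining gives $\delta_{t+1} \le \delta_t - \delta_t^2/(2LR^2)$; dividing by $\delta_t\delta_{t+1}$, telescoping, and using $\delta_0 \le \tfrac{L}{2}R^2$ (the majorant evaluated at $\Wstar$, where the gradient vanishes) yields $\delta_t \le 2LR^2/(t+4)$ --- the first bound, with $\norm{\Wstar}^2 = \operatorname{tr}(\Wstar\covhat{\Wstar}^\t) \le \norm{\Wstar}_F^2$ under the normalization $\covhat \preceq \I$. For the linear rate, the matching lower bound $\nabla^2\lossn(W) \succeq \mu(\I_k\otimes\covhat)$ makes $\lossn$ $\mu$-strongly convex in the same geometry, so its condition number there is exactly $\cond_\linkloss = L/\mu$, with no dependence on $\operatorname{cond}(\covhat)$; the textbook contraction $\norm{W_{t+1} - \Wstar}_{\covhat}^2 \le \tfrac{\cond_\linkloss - 1}{\cond_\linkloss + 1}\norm{W_t - \Wstar}_{\covhat}^2$ for gradient descent with step $1/L$, followed by one further use of smoothness, $\delta_t \le \tfrac{L}{2}\norm{W_t - \Wstar}_{\covhat}^2$, gives the stated geometric bound.

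I expect the only nonroutine part to be the second step --- establishing the block-diagonal Hessian majorant carefully and handling the degenerate cases: a non-smooth or non-strongly-monotone $\linkloss$, and a singular $\covhat$ (where one works inside $\range(\covhat)$ with the pseudo-inverse, using that $\lossn$ sees $W$ only through $\{Wx_i\}$) --- together with the bookkeeping needed to pass between the $\covhat$-weighted norm that arises naturally and the Frobenius norm appearing in the statement. Once the majorant is in place, the optimization recursion is entirely standard.
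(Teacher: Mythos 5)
Your proposal is correct and follows essentially the same route as the paper: majorize the Hessian by the block-diagonal matrix $L(\I_k\otimes\covhat)$, interpret update~\eqref{eqn:glm-multi-newton} as gradient descent with step $1/L$ in the metric induced by $\covhat$, and then invoke the standard smooth (resp.\ smooth and strongly convex) convergence rates, which the paper obtains by citing Nesterov's Corollary 2.1.2 and Theorem 2.1.15 rather than re-deriving them. Your additional care about the singular-$\covhat$ case and the passage from the $\covhat$-weighted norm to the Frobenius norm in the final bound addresses bookkeeping the paper leaves implicit.
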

The proof rests on demonstrating that the block-diagonal matrix formed
by copies of $L\covhat$ provides a majorization of the Hessian matrix,
along with standard results in convex optimization (see
e.g.~\cite{Nesterov04}) and is deferred to the supplement. Also,
observe that the convergence results in
Theorem~\ref{thm:glm-multi-newton} are completely independent of the
conditioning of the data matrix $\covhat$. Indeed they depend only on
the smoothness and strong convexity properties of $\linkloss$ which is
a function we know ahead of time and control. This is the primary
benefit of these updates over first-order updates.

In order to understand these issues better, let us quickly contrast
these results to the analogous ones for gradient descent. In that
case, we get qualitatively similar dependence on the number of
iterations. However, in the case of Lipschitz $\nabla \linkloss$, the
convergence rate is
$\order\left(\frac{L}{t}\sigma_{\max}\left(\frac{XX^T}{n}
\right)\norm{\Wstar}^2\right)$. Under strong monotonicity, the rate
improves to $\order\left(L\left(\frac{\cond_{\linkloss}\cond_{XX^T}
  \,-\, 1}{\cond_{\linkloss}\cond_{XX^T} \,+\,1}
\right)^t\right)$. That is, the convergence rate is slowed down by
factors depending on the singular values of the empirical covariance
in both the cases. Similar comparisons
can also be made for accelerated versions of both our and vanilla
gradient methods.
%, either deterministic or stochastic.

%While we are not aware of this upper on the Hessian, our focus is the
%simplicity and (potential) effectiveness of this algorithm.

%While this algorithm has a relatively simple proof (we are not aware
%of this upper bound in the literature, though it is relatively
%straightforward), our focus is on
\iffalse
Let us discuss some potential merits of this class of algorithms. As
pointed out in the introduction, the matrix $\covhat^{-1}$ can be just
thought of as a fixed preconditioner, which critically, is of size
$d\times d$ and not $dk\times dk$.  Observe also that the convergence
results in Theorem~\ref{thm:glm-multi-newton} are completely
independent of the conditioning of the data matrix. Indeed they depend
only on the smoothness and strong convexity properties of $\linkloss$
which is a function we know ahead of time and control. This is the
main benefit of these updates over first-order updates, either
deterministic or stochastic.. This issue is explored in more detail in
our experiments.
\fi

\subsection{Unknown Link Function for Multi-class}

The more challenging case is when the link function is unknown. This
setting has two main difficulties: the statistical one of how to
restrict the complexity of the class of link functions and the
computational one of efficient estimation (as opposed to local search
or other herutistic methods).

\begin{algorithm}[t]
\begin{algorithmic}
  \REQUIRE Initial weight matrix $W_0$, set of
  calibration functions $G=\{g_1, \ldots g_m \}$
\vspace{0.1in}
  \STATE Initialize the predictions: $  \yiter{0}_i = \xweights{0} x_i $ 
 \REPEAT
  \STATE Fit the residual: 
\begin{align} 
  \nonumber \xweights{t} &= \argmin_{\xwplain} \sum_{i=1}^n \norm{y_i
  - \yiter{t-1}_i - \xwplain x_i}_2^2, \quad \mbox{and}\\ 
  \ziter{t}_i &= \yiter{t-1}_i+\xweights{t} x_i. 
  \label{eqn:alg-calib-multi1}
\end{align}
  \STATE Calibrate the predictions $\ziter{t}$:
\begin{align}
  \nonumber \calweights{t} &= \argmin_{\calwplain} \sum_{i=1}^n 
\norm{y_i
-\calwplain  \linkbasis(\ziter{t}_i)}_2^2, \quad \mbox{and}\\ 
  \yiter{t}_i &= \textrm{clip}(\calweights{t} \linkbasis(\ziter{t})),
  \label{eqn:alg-calib-multi2}
\end{align}
where clip$(v)$ is the Euclidean projection of $v$ onto the
probability simplex in $\R^k$. 
\UNTIL{convergence}
\end{algorithmic}
\caption{Calibrated Least Squares}
\label{alg:glm-multi-calib}
\end{algorithm}

With regards to the former, a natural restriction is to consider
the class of link functions realized as the
derivative of a convex function in $k$-dimensions.
% based on the
%model~\eqref{eqn:glm-multi}. 
This naturally extends the Isotron algorithm from the binary
case~\cite{KS09}. Unfortunately, this is an extremely rich class; the
sample complexity of estimating a uniformly bounded convex, Lipschitz
function in $k$ dimensions grows exponentially with
$k$~\cite{Bronshtein76}. In our setting, this would imply that the
number of samples needed for a small error in link function estimation
would necessarily scale exponentially in the number of classes, even
with infinite computational resources at our disposal. To avoid this
curse of dimensionality, assume that there is a finite basis
$\linkbasis$ such that $g^{-1}=(\nabla \linkloss)^{-1} \in
\mbox{lin}(\linkbasis)$, $(\nabla \linkloss)^{-1}$ is the funcional
inverse of $\nabla \linkloss$. Without loss of generality, we also
assume that $\linkbasis$ always contains the identity function. We do
not consider the issue of approximation error here. 
%~\footnote{One could
%  obtain regret bounds when taking this into account, though one would
%  have to consider a multiplicative notion of regret.}.

Before presenting the algorithm, let us provide some more intuition
about our assumption $g^{-1}=(\nabla \linkloss)^{-1} \in
\mbox{lin}(\linkbasis)$. Clearly the case of $\linkbasis = g^{-1}$ for
a fixed function $g$ puts us in the setting of the previous
section. More generally, let us consider that $G$ is a dictionary of
$p$ functions so that $g^{-1}(y) = \sum_{i=1}^p \wtil_i G_i(y)$. In
the GLM~\eqref{eqn:glm-multi}, this means that we have an overall
linear-like model\footnote{It is not a linear model since the
  statistical noise passes through the functions $G_i$ rather than
  being additive.}
\begin{equation*} 
  \sum_{i=1}^p \Wtil_i G_i(\E[Y | x]) = \Wopt x. 
\end{equation*}
If we let $p = k$ and $G_i(y)$ be the $i_{th}$ class indicator $y_i$,
then the above equation boils down to 
\begin{equation}\label{eq:cca-like}
\Wtil^T\E[Y | x] = \Wopt x,
\end{equation}
meaning that an unknown linear combination of the
class-conditional probabilities is a linear function of the data. More
generally, we consider $G_i$ to also have higher-order monomials such
as $y_i^2$ or $y_i^3$ so that the LHS is some low-degree polynomial of
the class-conditional probability with unknown coefficients. 

Now, the computational issue is to efficiently form accurate
predictions (as in the binary case~\cite{KS09}, the problem is not
convex).  We now describe a simple strategy for simultaneously
learning the weights as well as the link function, which not only
improves the square loss at every step, but also converges to the
optimal answer quickly.  The strategy maintains two sets of weights,
$\xweights{t} \in \R^{k\times d}$ and $\calweights{t} \in \R^{k\times
  |\linkbasis|}$ and maintains our current predictions $\yiter{t}_i
\in \R^k$ for each data point $i = 1,2,\ldots, n$. After initializing
all the predictions and weights to zero, the updates shown in
Algorithm~\ref{alg:glm-multi-calib} involve two alternating least
squares steps. The first step fits the residual error to $x$ using the
weights $W_t$. This The second step then fits $y$ to the functions of
$\yiter{t}$'s, i.e. to $G(\yiter{t})$. Finally, we project onto the
unit simplex in order to obtain the new predictions, which can only
decrease the squared error and can be done in $O(k)$
time~\cite{DuchiSSSiCh08}.

In the context of the examples of $G_i$ mentioned above, the algorithm
boils down to predicting the conditional probability of $Y= i$ given
$x$, based not only on $x$, but also on our current predictions for
all the classes (and higher degree polynomials in these
predictions)\footnote{The alternating least-squares update in this
  context are also quite reminiscent of CCA.}. 
%% \iffalse
%% \begin{align}
%%   \nonumber (\xweights{t}, \calweights{t}) &= \argmin_{\xwplain,
%%     \calwplain} \sum_{i=1}^n \norm{y_i - \xwplain x_i - \calwplain
%%   \linkbasis(\yiter{t-1}_i)}_2^2, \quad \mbox{and}\\
%%   \yiter{t}_i &= \xweights{t} x_i + \calweights{t}
%%   \linkbasis(\yiter{t-1}). 
%%   \label{eqn:alg-calib-multi}
%% \end{align}
%% \fi

%% \iffalse
%% To get some intuition about this iteration, recall that $\linkbasis$
%% always contains the identity function by assumption. Hence, from the
%% updates~\eqref{eqn:alg-calib-multi1} and~\ref{alg:calib-multi2}, we are
%% predictions.
%% % of the algorithm by lower
%% %bounding the amount of improvement made at each step. 
%% \fi
For the analysis of Algorithm~\ref{alg:glm-multi-calib},
we focus on the noiseless case to understand the optimization issues.
Analyzing the statistical issues, where there is noise, can be
handled using ideas in \cite{KS09,KakadeKKS11}. 

\begin{theorem}
  Suppose that $y_i=g(\Wopt x_i)$ and that the link function $g=\nabla
  \linkloss$ satisfies the Lipschitz and strong monotonicity
  conditions~\eqref{eqn:lipschitz-multi}
  and~\eqref{eqn:multi-link-strong} with constants $L$ and $\mu$
  respectively. Suppose also that $\nabla \linkloss(0) =
  \ones/k$. Using the (calibrated) Least Squares updates
  (Algorithm~\ref{alg:glm-multi-calib}) with $W_0= 0$, for all $t
  = 1,2,\ldots$ we have the bound
%Then for all $t = 1,2,\ldots$ we have the upper bound
\[  
\frac{1}{n} \sum_{i=1}^n \norm{\yiter{t}_i - y_i}_2^2 \leq
  \frac{22\cond_{\linkloss}^2}{t}
\]
where $\cond_{\linkloss} = L/\mu$.
\label{thm:multi-calib}
\end{theorem}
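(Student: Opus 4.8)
The plan is to control the empirical prediction error $\varepsilon_t := \tfrac1n\sum_{i=1}^n \norm{\yiter{t}_i - y_i}_2^2$ directly, by establishing a recursion of the form $\varepsilon_{t-1} - \varepsilon_t \ge \varepsilon_{t-1}^2/(c\,\cond_{\linkloss}^2)$ for an absolute constant $c$; then $\tfrac1{\varepsilon_t}-\tfrac1{\varepsilon_{t-1}} \ge (\varepsilon_{t-1}-\varepsilon_t)/\varepsilon_{t-1}^2 \ge 1/(c\,\cond_{\linkloss}^2)$, and summing (with $\varepsilon_0 = \tfrac1n\sum_i\norm{y_i}_2^2 = 1$, each $y_i$ being a class indicator) yields $\varepsilon_t \le c\,\cond_{\linkloss}^2/t$. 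Write $z_i^* := \Wopt x_i$, so $y_i = \link(z_i^*)$, and let $\phi := \link^{-1} = \nabla\linkloss^*$ be the inverse link. Two consequences of the assumptions will be used throughout: (i) by hypothesis $\phi$ equals a fixed linear combination $\Wtil^*\linkbasis(\cdot)$ of the dictionary, and in the noiseless case this is precisely the ``linear-like'' identity $\phi(y_i) = \Wtil^*\linkbasis(y_i) = z_i^*$ of~\eqref{eq:cca-like}; (ii) the $\mu$-strong monotonicity and $L$-smoothness of $\link = \nabla\linkloss$ make $\phi$ be $\tfrac1\mu$-Lipschitz and $\tfrac1L$-strongly monotone, and since $\link(0) = \ones/k$ forces $\phi(\ones/k) = 0$, every $v$ in the simplex satisfies $\norm{\phi(v)}_2 \le \tfrac1\mu\norm{v-\ones/k}_2 \le \tfrac1\mu$. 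Also recall $\cond_{\linkloss} = L/\mu \ge 1$.

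First I would show that both least-squares steps are non-increasing and that the $x$-fit has a quantifiable value. The update for $\xweights{t}$ is an unconstrained least-squares fit of $y_i - \yiter{t-1}_i$ onto $x_i$, so the new residual is empirically orthogonal to the column span of the data, $\tfrac1n\sum_i (y_i - \ziter{t}_i)x_i^{\top} = 0$, and Pythagoras gives $\tfrac1n\sum_i\norm{y_i - \ziter{t}_i}_2^2 = \varepsilon_{t-1} - \delta_t$ with $\delta_t := \tfrac1n\sum_i\norm{\xweights{t}x_i}_2^2 \ge 0$. In the calibration step, because $\linkbasis$ contains the identity the choice of $\calweights{t}$ that merely reproduces $\ziter{t}_i$ is feasible; since $\calweights{t}$ is the least-squares optimum and the clip onto the simplex is a projection that does not move points away from $y_i$ (which lies in the simplex), $\varepsilon_t \le \tfrac1n\sum_i\norm{y_i - \calweights{t}\linkbasis(\ziter{t}_i)}_2^2 \le \varepsilon_{t-1}-\delta_t \le \varepsilon_{t-1}$. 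Thus $\varepsilon_t$ is monotone and $\varepsilon_t \le \varepsilon_0 = 1$ for all $t$.

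The actual progress comes from also using the dictionary in the calibration step. Besides the identity selector, the combination $\calweights{t} = (\text{identity selector}) + \beta\,\Wtil^*$ is feasible for every scalar $\beta$ and predicts $\ziter{t}_i + \beta\,\phi(\ziter{t}_i)$; optimizing $\beta$ and applying the clip bound gives
\[
 \varepsilon_t \;\le\; (\varepsilon_{t-1}-\delta_t) \;-\; \frac{\bigl(\tfrac1n\sum_i\ip{y_i-\ziter{t}_i}{\phi(\ziter{t}_i)}\bigr)^2}{\tfrac1n\sum_i\norm{\phi(\ziter{t}_i)}_2^2}.
\]
Now $\tfrac1n\sum_i\ip{y_i-\ziter{t}_i}{z_i^*} = 0$ since $z_i^* = \Wopt x_i$ is in the column span and the residual is orthogonal to it, so the numerator equals $\tfrac1n\sum_i\ip{y_i-\ziter{t}_i}{\phi(\ziter{t}_i)-\phi(y_i)}$, whose magnitude is at least $\tfrac1L\cdot\tfrac1n\sum_i\norm{y_i-\ziter{t}_i}_2^2 = \tfrac1L(\varepsilon_{t-1}-\delta_t)$ by strong monotonicity of $\phi$; and $\norm{\phi(\ziter{t}_i)}_2 \le \tfrac1\mu + \tfrac1\mu\norm{\ziter{t}_i-y_i}_2$ gives $\tfrac1n\sum_i\norm{\phi(\ziter{t}_i)}_2^2 \le \tfrac2{\mu^2}(1+\varepsilon_{t-1}) \le \tfrac4{\mu^2}$. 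Hence the calibration step contributes at least $\tfrac{1}{4\cond_{\linkloss}^2}(\varepsilon_{t-1}-\delta_t)^2$, and combining with $\delta_t$, $\varepsilon_{t-1}-\varepsilon_t \ge \delta_t + \tfrac{1}{4\cond_{\linkloss}^2}(\varepsilon_{t-1}-\delta_t)^2 \ge \tfrac{\varepsilon_{t-1}^2}{4\cond_{\linkloss}^2}$, the last inequality minimizing the right-hand side over $\delta_t\in[0,\varepsilon_{t-1}]$ using $\varepsilon_{t-1}\le 1\le 2\cond_{\linkloss}^2$. This already yields $\varepsilon_t \le 4\cond_{\linkloss}^2/t$; a careful accounting of the error terms glossed over above — that $\ziter{t}_i$ is only within $\order(\sqrt{\delta_t})$ of the simplex rather than in it, that once $\phi$ is the \emph{linear extension} $\Wtil^*\linkbasis$ of $\link^{-1}$ the monotonicity/Lipschitz bounds hold only up to constants off the simplex, and the slack from the clip — replaces the constant $4$ by $22$.

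The main obstacle is exactly the third paragraph: the identity-in-dictionary argument only certifies monotonicity of $\varepsilon_t$, and to get a genuine per-iteration decrease one must show that the part of the residual invisible to the linear $x$-fit is still seen by the calibration step. Concretely, after the first step the residual $y_i-\ziter{t}_i$ is orthogonal to $\mathrm{col}(X)$ (so $\delta_t$ can be arbitrarily small), and the content of the GLM assumption $\phi = \Wtil^*\linkbasis$ is that this residual nonetheless has inner product $\gtrsim \varepsilon_{t-1}/L$ with the available calibration direction $\phi(\ziter{t}_\cdot)$; making this precise is where the empirical orthogonality $\tfrac1n\sum_i(y_i-\ziter{t}_i)x_i^{\top}=0$ (to annihilate the $\Wopt x_i$ component of $\phi(\ziter{t}_i)$) and the strong monotonicity of the inverse link (to lower bound the remainder through $\phi(y_i)-\phi(\ziter{t}_i)$) must be combined. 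The normalization $\nabla\linkloss(0)=\ones/k$ is not cosmetic here: it pins down $\phi$ via $\phi(\ones/k)=0$, which is what keeps the denominator $\tfrac1n\sum_i\norm{\phi(\ziter{t}_i)}_2^2$ bounded by an absolute multiple of $1/\mu^2$ and hence turns the estimate into a $\cond_{\linkloss}^2$ (rather than data-dependent) bound.
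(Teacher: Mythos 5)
Your argument is correct and follows essentially the same route as the paper's own proof: the normal equations of the two least-squares steps, orthogonality of the residual to $\Wopt x_i$ so that the calibration inner product becomes $\sum_i\ip{y_i-\ziter{t}_i}{\nabla\linkloss^*(\ziter{t}_i)-\nabla\linkloss^*(y_i)}$, strong monotonicity of $\nabla\linkloss^*$ to lower-bound the gain, the Lipschitz bound on $\nabla\linkloss^*$ anchored at $\nabla\linkloss(0)=\ones/k$ to bound the denominator, and the resulting quadratic recursion. Your bookkeeping is in fact slightly tighter (giving constant $4$ rather than $22$), and the hedging in your last paragraph is unnecessary: the Lipschitz/monotonicity conditions are assumed for all $u,v\in\R^k$, so the dual inequalities for $\nabla\linkloss^*$ hold globally and no off-simplex correction is needed.
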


We again emphasize the fact that the
updates~\eqref{eqn:alg-calib-multi1} and~\eqref{eqn:alg-calib-multi2}
only require the solution of least-squares problems in a similar
spirit as Algorithm~\ref{alg:glm-multi-full}. Finally, we note that
the rules to compute predictions in our
updates(~\eqref{eqn:alg-calib-multi1} and
~\eqref{eqn:alg-calib-multi2}) require previous predictions (i.e. the
learned model is not \emph{proper} in that it does not actually
estimate $g$, yet it is still guaranteed to make accurate
predictions).

\iffalse
%SK: not enough space.
 This gives us two options at test time. Given a test
point $x$ and our weights $W, \beta$, one could compute the fixed
point of the iteration $y = Wx + \beta G(y)$ as the final
prediction. In a transductive setting, one can do this estimation
simultaneously with the optimization over $W$ and $\beta$.

We also remark that the fixed point iteration is very close to the
formulation $\beta (I- G)(y) = W x$ which would mean that $\beta$ and
$W$ are the solutions to the Canonical Correlation Analysis problem.
This view might also suggest a way for handling the noisy case via
CCA.
\fi

\subsection{Scalable Variants}

\def\GEN{\text{\textsc{GEN}}}
\begin{algorithm}[t]
\begin{algorithmic}
  \REQUIRE data $\{(x_i, y_i)\}$, batch generator $\GEN$, batch size $p$, iterations $T$
\vspace{0.1in}
    \STATE Initialize predictions 
    \[
    \yiter{1}_i = 0
    \]
    \FOR{$t=1,\ldots,T$}
  \STATE Generate $p$ features from the original ones
  \[
      \{\tilde{x}_i\} = \GEN(\{x_i\},p)
  \]
  \STATE Let $W_t$ be the output of Algorithm~\ref{alg:glm-multi-full} or \ref{alg:glm-multi-calib} 
  on the dataset $\{(\tilde{x}_i, y_i-\yiter{t}_i)\}$
   \STATE Update predictions
   \[
       \yiter{t+1}_i  = \yiter{t}_i + W_t\tilde{x}_i
   \]
 \ENDFOR
\end{algorithmic}
\caption{Stagewise Regression
\label{alg:glm-multi-stagewise}}
\end{algorithm}

When the number of features is large, any optimization algorithm that
scales superlinearly with the dimensionality faces serious
computational issues. In such cases we can adopt a block coordinate
descent approach. To keep the presentation fairly general, we assume
that we have an algorithm $\GEN$ that returns a small set of $m$
features, where $m$ is small enough so that fitting models with $m$
features is efficient (e.g. we typically use $m\approx 1000$). The
$\GEN$ procedure can be as simple as sampling $m$ of the original
features (with or without replacement) or more complex schemes such as
random Fourier features~\cite{rahimi2007random}. We call $\GEN$ and
fit a model on the $m$ features using either
Algorithm~\ref{alg:glm-multi-full} or
Algorithm~\ref{alg:glm-multi-calib}.  We then compute residuals and
repeat the process on a fresh batch of $m$ features returned by
$\GEN$. In Algorithm~\ref{alg:glm-multi-stagewise} we provide
pseudocode for this stagewise regression procedure. We stress that
this algorithm is purely a computational convenience. It can be
thought as the algorithm that would result by a block-diagonal
approximation of the second moment matrix $\Sigma$ (not just across
classes, but also groups of features).
Algorithm~\ref{alg:glm-multi-stagewise} bears some resemblance to
boosting and related coordinate descent methods, with the crucial
difference that $\GEN$ is not restricted to searching for the best set
of features. Indeed, in our experiments $\GEN$ is either sampling from
the features without replacement or randomly projecting the data in
$m$ dimensions and transforming each of the $m$ dimension by a simple
non-linearity. Despite its simplicity, more work needs to be done to
theoretically understand the properties of this variant as clearly as
those of Algorithm~\ref{alg:glm-multi-full} or
Algorithm~\ref{alg:glm-multi-calib}. Practically, stagewise regression
can have useful regularization properties but these can be subtle and
greatly depend on the $\GEN$ procedure. In text classification, for
example, fitting the most frequent words first leads to better models
than fitting the least frequent words first.

\section{Experiments}

We consider four datasets MNIST, CIFAR-10, 20 Newsgroups, and RCV1
that capture many of the challenges
encountered in real-world learning tasks.  We believe that the lessons
gleaned from our analysis and comparisons of performance on these
datasets apply more broadly.

For MNIST, we compare our algorithms
with a variety of standard algorithms.  Both in terms of
classification accuracy and optimization speed, we achieve close to
state of the art performance among permutation-invariant methods 
($1.1\%$ accuracy, improving upon methods such as the ``dropout''
neural net).  For CIFAR-10, we also obtain nearly state of the art 
accuracy ($>85\%$) using standard features.  Here, we emphasize that 
it is the computational efficiency
of our algorithms which enables us to achieve higher accuracy without
novel feature-generation.

The story is rather different for the two text datasets, where the
performance of these stagewise methods is less competitive with
online approaches, though we do demonstrate substantial reduction 
in error rate in one of the problems. As we discuss below, the statistical
properties of these text datasets (which differ significantly from
those of the image datasets) strongly favor online approaches.

\begin{figure*}[!t]
  \centering
  \setlength{\tabcolsep}{2pt}
  \begin{tabular}{cc}
  \includegraphics[width=0.35\linewidth]{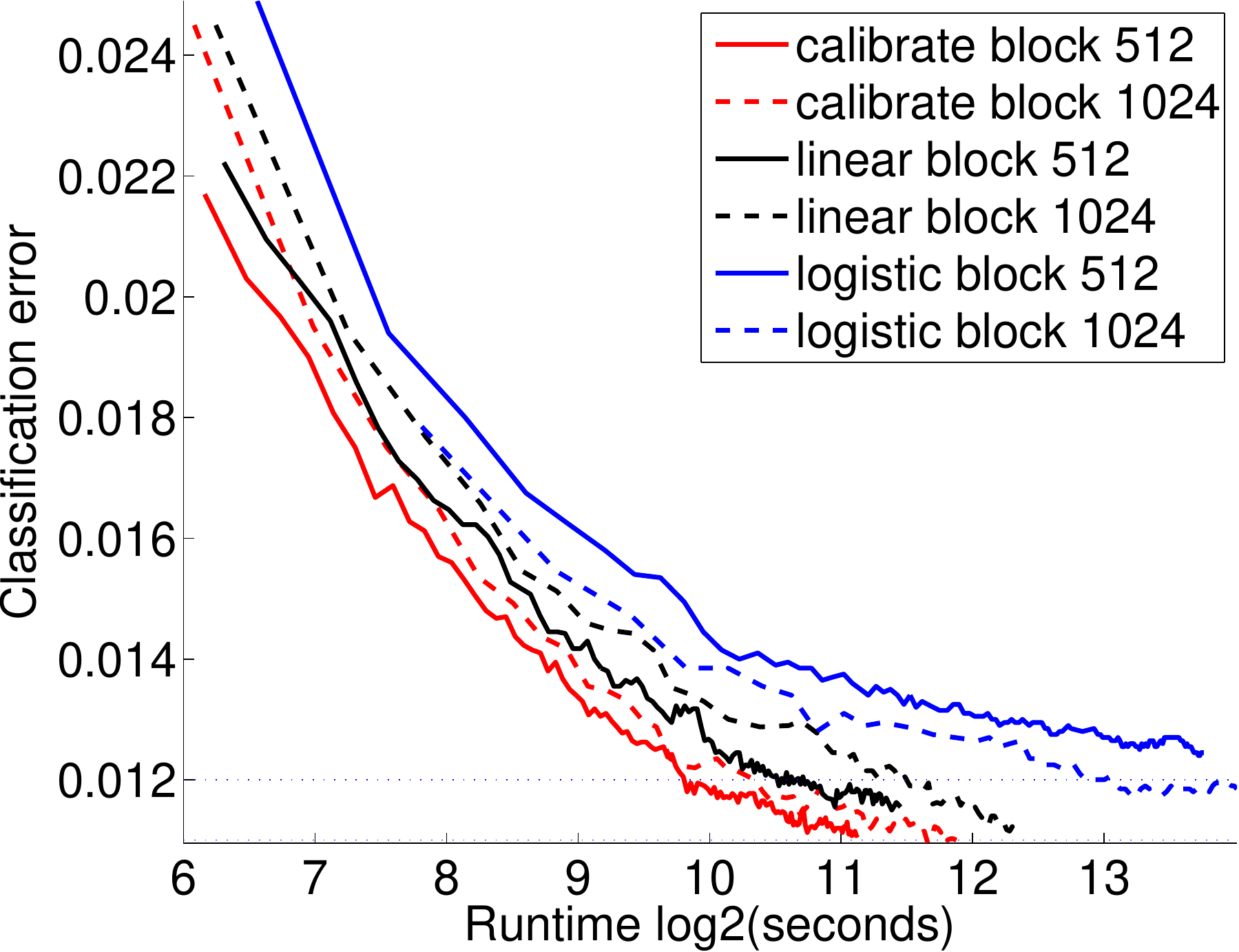}
  &
  \includegraphics[width=0.4\linewidth]{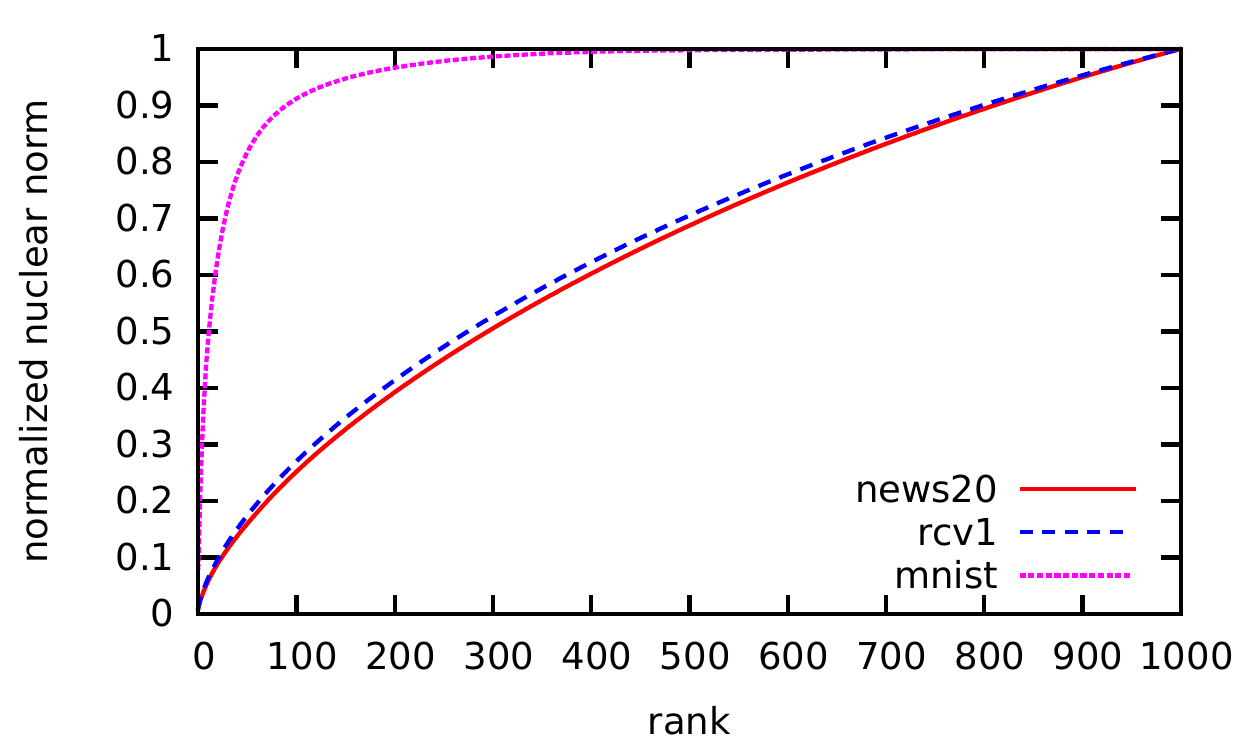}\\	
  (a) Time vs Error & (b) Text vs Vision Data Spectrum  
  \end{tabular}
  \caption{ {\footnotesize (a) Runtime versus error for different
            variants of our algorithms; (b) The fraction of the sum of the top 1000 singular values 
that is captured by the top $x$ singular values.\/} }
  \label{figure:mnist_spec}
  \vspace{-4mm}
\end{figure*}

\subsection{MNIST}
\label{sec:mnist}

Nonlinear classifiers are needed to achieve state-of-the-art
performance in MNIST dataset.  Although MNIST dataset only contains
60K data points (small by modern standards), the requirement
for nonlinear features make this dataset computationally
challenging. For instance, a nonlinear support vector machine with a
Gaussian RBF kernel needs to manipulate a 60K$\times$60K kernel
matrix. This will require hundreds of Gigabytes of memory, not available
on most modern desktop machines. Hence we use an explicit feature
representation and train our classifiers in the primal
space. Specifically we construct random fourier features which are
known to approximate the Gaussian kernel $k(x,x')=\exp(-\|x - x'\|^2 /
s)$~\cite{rahimi2007random}, though as discussed in the appendix
various other methods to construct random low degree polynomials are
also effective. 

We start by comparing linear and logistic regression using
Algorithm~\ref{alg:glm-multi-full}, as well as the calibration variant
of Algorithm~\ref{alg:glm-multi-calib}. For the calibration variant,
we use a basis $G(y)$ consisting of $y$, $y^2$ and $y^3$ (applied
elementwise to vector $y$). We compare these algorithms on raw pixel
features, as well as small number of random Fourier features described
above. As seen in Table~\ref{tb:linlog}, the performance of logistic
and calibrated variants seem similar and consistently superior to
plain linear regression.

%It can be seen from Table~\ref{tb:linlog} that as we
%increase the random feature dimension to 8000, calibrated linear
%regress achieved comparable classification error as logistic
%regression.

Next, we move to improving accuracy by using the stagewise approach of
Algorithm~\ref{alg:glm-multi-stagewise}, which allows us to scale up
to larger number of random Fourier features. Concretely, we fit blocks
of features (either 512 and 1024) with
Algorithm~\ref{alg:glm-multi-stagewise} with three alternative update
rules on each stage: linear regression, calibrated linear regression,
and logistic regression (with 50 inner loops for the logistic
computations). Here, our calibrated linear regression is the simplest
one: we only use the previous predictions as features in our new batch
of features.

Our next experiment demonstrates that all three (extremely simple and
parameter free) algorithms quickly achieve state of the art
performance. Figure~\ref{figure:mnist_spec}(a) shows the relation
between feature block size, classification test error, and runtime for
these algorithm variants. Importantly, while the linear (and linear
calibration) algorithms do not achieve as low an error for a fixed
feature size, they are faster to optimize and are more effective
overall.

\begin{table}[b] 
\vspace{-3mm} 
\caption{ {\footnotesize
    Linear Regression vs. logistic regression vs. (polynomial) calibration.
    For the polynomial calibration, we refit our
    predictions with $\hat y$, $\hat y^2$ and $\hat y^3$. \/}}
\label{tb:linlog}
\centering
\begin{tabular}{|l|c|c|c|c|c|}
\hline
Algorithm & Linear & Logistic & (poly.) Calibration \\
\hline
Raw pixels & 14.1\% & 7.8\% & 8.1\% \\
4000 dims &  1.83 \% & 1.48 \% & 1.54 \% \\
8000 dims & 1.48\% & 1.33\%& 1.36\%\\
\hline
\end{tabular}
\end{table}

Notably, we find that (i) linear regression achieves better runtime
and error trade-off, even though for a fixed number of features linear
regressions are not as effective as logistic regression (as we see in
Figure~\ref{figure:mnist_mle}(a)). (ii) relatively small size feature blocks
provide better runtime and error trade-off (blocks of size 300 provide
further improvements).
%This is because the computation needed to
%solve the least square problem for each block is cubic in the size of
%the block. The singular value spectrum of MNIST random features decays
%very fast, a block size of 512 seems to capture most of the
%information while being computationally more efficient. 
(iii) the linearly calibrated regression works better than the vanilla
linear regression. 

\begin{figure*}[!t]
  \centering
    % 	\hspace{-5mm}
  \setlength{\tabcolsep}{2pt}
    \begin{tabular}{cc}
      \includegraphics[width=0.48\linewidth]{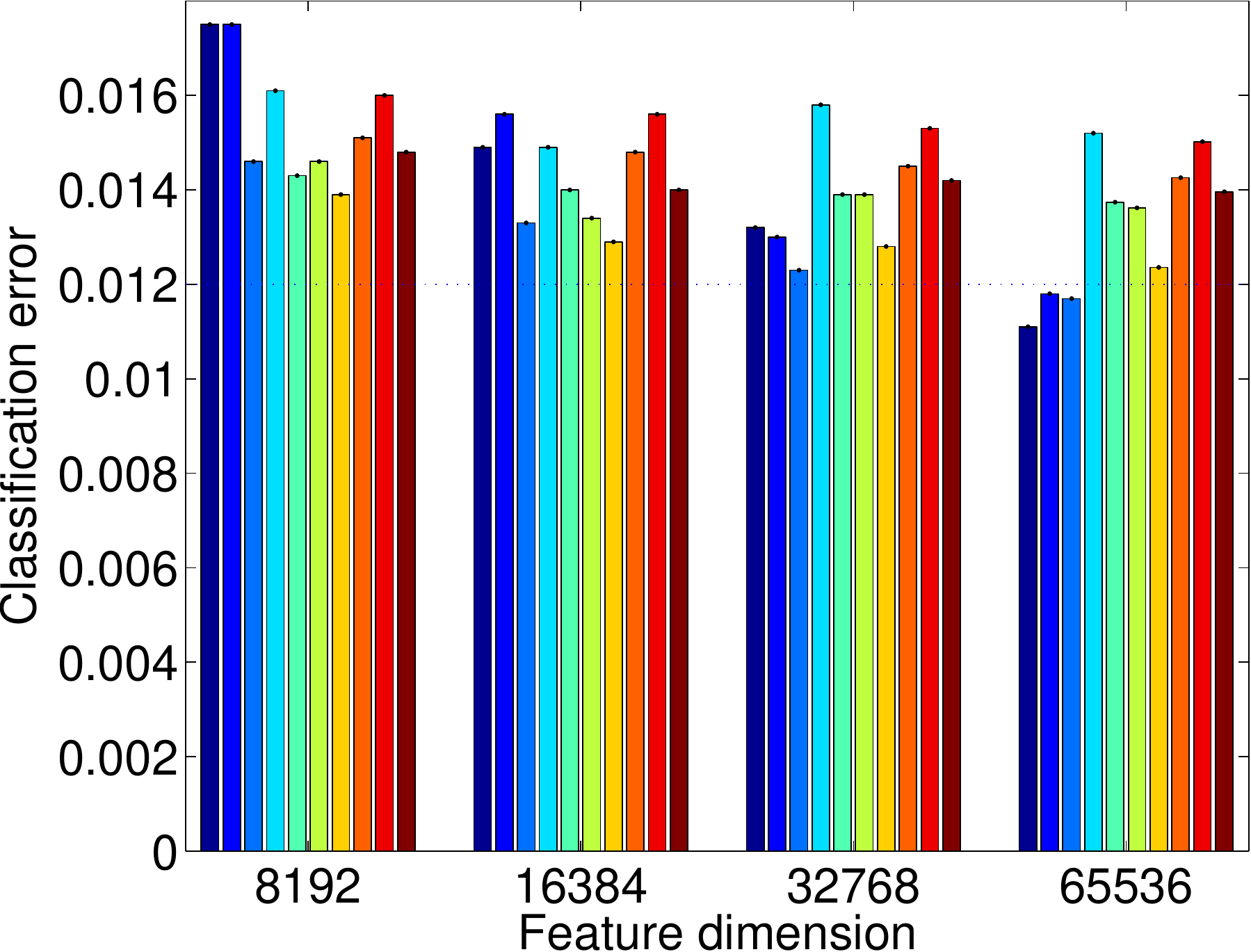}
        &
      \includegraphics[width=0.46\linewidth]{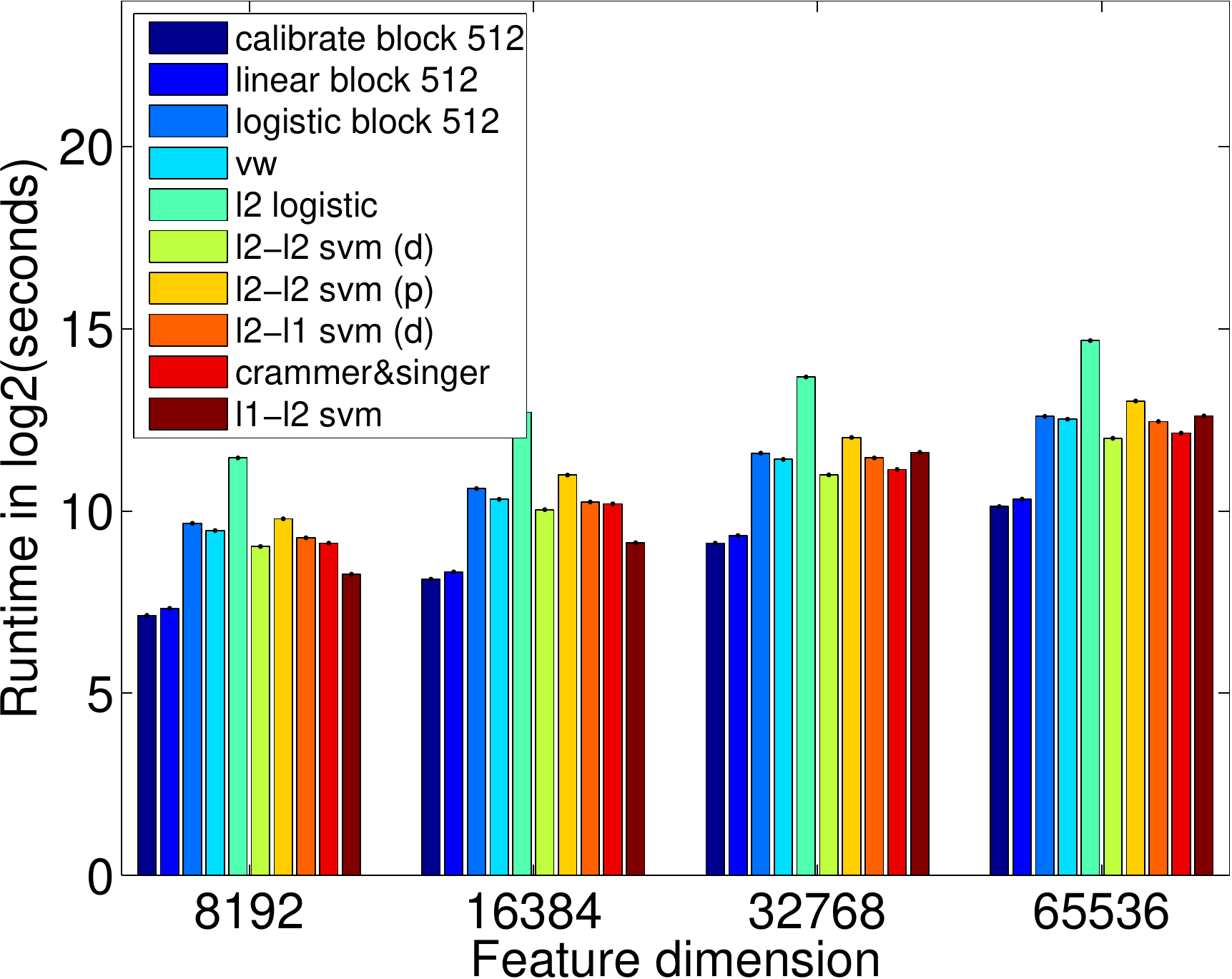}\\
       (a) Comparable Errors & (b) Time Comparisons
    \end{tabular}
    \vspace{-4mm}
    \caption{ {\footnotesize (a) Error comparison between our
            variants, VW and six variants of Liblinear: L2-regularized
            logistic regression, L2-regularized L2-loss support vector
            classification (dual), L2-regularized L2-loss support vector
            classification (primal), L2-regularized L1-loss support vector
            classification (dual), multi-class support vector classification
            by Crammer and Singer, L1-regularized L2-loss support vector
            classification.
            %In this experiment, we used random features for the Gaussian kernel
            %with increasing dimensions (8K, 16K, 32K and 64K dimensions)
            (b) Runtime comparison in {\bf log\/} time. }}
    \label{figure:mnist_mle}
    \vspace{-4mm}
\end{figure*}

We also compare these three variants of our approach to other
state-of-the-art algorithms in terms of classification test error and
runtime (Figure~\ref{figure:mnist_mle}(a) and (b)). Note the
\emph{logarithmic} scaling of the runtime axes.  The comparison
includes VW~\footnote{\url{http://hunch.net/~vw/}}, and six algorithms
implemented in Liblinear~\cite{fan2008Liblinear} (see figure
caption). We took care in our attempts to time these algorithms to
reflect their actual computation time, rather than their loading of
the features (which can be rather large, making it extremely time
consuming to run these experiments); our stagewise algorithms generate
new features on the fly so this is not an issue. See the appendix for
further discussion.

From Figure~\ref{figure:mnist_mle} (a) and (b), our logistic algorithm
is competitive with all the other algorithms, in terms of it's
accuracy (while for lower dimensions the naive linear methods fared a
little worse). All of our algorithms were substantially faster.

\iffalse
the competitors learned a better model when the feature dimensions are
less than 32K, our method can squeeze out the last bit of information
and finally resulting in better models in 32K and 65K
dimensions. 
\fi

Finally, the models produced by our methods drive the classification
test error down to 1.1\% while none of the competitors achieve this
test error. Runtime wise, our method is extremely fast for the linear
regression and calibrated linear regression variants, which are
consistently at least 10 times faster than the other highly optimized
algorithms. This is particularly notable given the simplicity of this
approach.

% {\bf Iterative and Stagewise Procedures} Figure~\ref{figure:mnist_mle} compares the speed of just least squares
% regression on various feature dimensions. The left plot shows both
% these various algorithms all have comparable errors (we should both
% the least squares solution and our logistic regression algorithm run
% for 4 iterations). The right plot shows the run time, plotted on a
% \emph{logarithmic} scale. Notably, for down to 1.5\% accuracy, the
% least squares algorithm (and the logistic variant) are notable faster
% than many other highly optimized algorithms.
%
% In order to achieve state of the art performance, namely 1.1\% error,
% we must move to a significantly larger dimension. Here, Figure~\ref{figure:mnist_mle} shows
% time comparisons for the stagewise procedure where we add 300 features
% at a time.

%
% \noindent
% {\bf Calibration vs. Logistic Regression}

\iffalse
\begin{table}[t]
    \centering
    \begin{tabular}{|l|c|c|c|c|c|}
        \hline
        Algorithm & Linear & Logistic & (polynomial) Calibration \\
        \hline
        Raw pixels & 14.1\% & 7.8\% & 8.1\% \\
        4000 Random Features &  1.83 \% & 1.48 \% & 1.54 \% \\
        8000 Random Features & 1.48\% & 1.33\%& 1.36\%\\
        \hline
    \end{tabular}
    \caption{All error bars are under $\pm$ .02 \%. for the 4000 and 8000 dimension plots.}
        \label{tab:template}
    \end{table}
    \fi

\subsection{CIFAR-10}

The CIFAR-10 dataset is a more challenging dataset, where many image
recognition algorithms have been tested (primarily illustrating
different methods of feature generation; our work instead focusses on
the optimization component, given a choice of features).  The neural
net approaches of ``dropout'' and ``maxout'' algorithms of
\cite{hinton2012improving,goodfellow:maxout} provide the best reported
performance of 84\% and 87\%, without increasing the size of the
dataset (through jitter or other transformations).  We are able to
robustly achieve over 85\% accuracy with linear regression on standard
convolution features (without increasing the size of the dataset
through jitter, etc.), illustrating the advantage that improved
optimization provides.

Figure~\ref{figure:cifar} illustrates the performance when we use two
types of convolutional features: features generated by convolving the
images by random masks, and features generated by convolving with
K-means masks (as in \cite{DBLP:journals/jmlr/CoatesNL11}, though we
do \emph{not} use contrast normalization). %See the appendix for
%specifics.

We find that using only relatively few filters (say about 400), along
with polynomial features, are sufficient to obtain over $80\%$
accuracy extremely quickly.  Hence, using the thousands of generated
features, it is rather fast to build multiple models with disjoint
features and model average them, obtaining extremely good performance.

\iffalse
\begin{figure}[!t]
  % 	\hspace{-5mm}
  \setlength{\tabcolsep}{2pt}
  \begin{tabular}{ccc}
    \includegraphics[width=2.5in]{}
    &
    \includegraphics[width=2.5in]{}\label{fig:multi-level}
    \\
    (a) Error vs Iterations & (b) With Model Averaging
  \end{tabular}
  \vspace{-4mm}
  \caption{ {\footnotesize
      In the model averaging plot, we alternate between averaging a model
      using random filters and those with k-means filters.
      \/}
  }
  \label{figure:cifar}
  \vspace{-4mm}
\end{figure}
\fi

\begin{figure}[!t]
\hspace{-5mm}
\includegraphics[ width=3.6in]{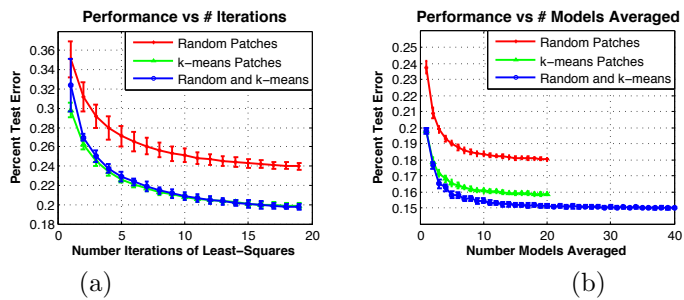}\label{fig:cifar} \\
\vspace{-7mm}
\begin{center}(a)~~~~~~~~~~~~~~~~~~~~~~~~~~~~~~~~~~~~~~~~~~~~~~~~~~~~(b)\end{center}
\vspace{-3mm}
\caption{CIFAR-10 results using two
types of convolutional features: (a) features generated by convolving
the images by random masks, and (b) features generated
by convolving with K-means masks.}
\label{figure:cifar}
\vspace{-3mm}
\end{figure}

\subsection{Well-Conditioned Problems}

We now examine two popular multiclass text datasets:
20~newsgroups\footnote{http://archive.ics.uci.edu/ml/datasets/Twenty+Newsgroups}
(henceforth NEWS20), which is a 20 class dataset and a four class version 
of Reuters Corpus Volume~1~\cite{lewis2004rcv1} (henceforth RCV1). We use a standard
(log) term frequency representation of the data, discussed in the
appendix.  These data pose rather different challenges than our vision
datasets; in addition to being sparse datasets, they are extremely
well conditioned.

The ratio of the 2nd singular value to the 1000th one (as a proxy
for the condition number) and is 19.8 for NEWS20 and 14 for
RCV1. In contrast, for MNIST, this condition number is about 
72000 (when computed with 3000 random Fourier features). 
Figure~\ref{figure:mnist_spec}(b) shows the normalized spectrum for
the three data matrices.

As expected, online first-order methods (in particular VW) fare far
more favorably in this setting, as seen in Table~\ref{fig:text}. We
use a particular greedy procedure for a our stagewise ordering (as
discussed in the appendix, though random also works well). Note that
this data is well suited for online methods: it is sparse (making the
online updates cheap) and well conditioned (making the gap in
convergence between online methods and second order methods small).

Ultimately, an interesting direction is developing hybrid approaches
applicable to both cases.

\iffalse
We limit the spectrum to the top 1000 
singular values for efficiency reasons. In contrast with MNIST, the absence 
of any sharp increase in this spectrum suggests that the top 1000 
singular values are roughly of the same magnitude. Therefore, the intrinsic
dimension of this data is at least 1000. Furthermore, the ratio of 
$\sigma_1$ to $\sigma_{1000}$, our proxy for the condition number, is
19.8 for NEWS20 and 14 for RCV1.
\fi

% \begin{figure}
% \begin{minipage}{0.4\textwidth}
% \begin{tabular}{c}
% \includegraphics[width=\linewidth]{../figs/textspectra}\\
% \vspace{-0.1in}
% (a)
% \end{tabular}
% \end{minipage}
% %\hfill
% \begin{minipage}{0.45\textwidth}
% \begin{tabular}{c}
%     ~\\
%     \begin{tabular}{|c|c|c|c|c|}
%         \hline
%                & \multicolumn{2}{|c|}{NEWS20} & \multicolumn{2}{|c|}{RCV1}\\
%         \hline
%         Method & Time   & \%Error & Time & \%Error\\
%         \hline
%         VW        &  2.5 & 12.4 & 2.5 & 2.75 \\
%         Liblinear &  27  & 13.8 & 120 & 2.73 \\
%         Stagewise &  40  & 11.7 & 240 & 2.77 \\
%         \hline
%     \end{tabular}\\
%     ~\\
%     ~\\
% \vspace{-0.1in}
% (b)
% \end{tabular}
% \end{minipage}
% \caption{{\footnotesize (a) The fraction of the sum of the top 1000 singular values 
% that is captured by the top $x$ singular values. (b) Running times and test errors
% in text datasets for VW, Liblinear, and Stagewise regression}}
% \label{fig:text}
% \end{figure}

\begin{table}
\centering 
\caption{{\footnotesize Running times and test errors
in text datasets for VW, Liblinear, and Stagewise regression}}
\label{fig:text}
    \begin{tabular}{|c|c|c|c|c|}
        \hline
               & \multicolumn{2}{|c|}{NEWS20} & \multicolumn{2}{|c|}{RCV1}\\
        \hline
        Method & Time   & \%Error & Time & \%Error\\
        \hline
        VW        &  2.5 & 12.4 & 2.5 & 2.75 \\
        Liblinear &  27  & 13.8 & 120 & 2.73 \\
        Stagewise &  40  & 11.7 & 240 & 2.77 \\
        \hline
    \end{tabular}
\vspace{-3mm}    
\end{table}

\section{Discussion} 

In this paper, we present a suite of fast and simple algorithms for
tackling large-scale multiclass prediction problems. We stress that
the key upshot of the methods developed in this work is their
conceptual simplicity and ease of implementation. Indeed these
properties make the methods quite versatile and easy to extend in
various ways. We showed an instance of this in
Algorithm~\ref{alg:glm-multi-calib}. Similarly, it is straightforward
to develop accelerated variants~\cite{Nesterov09}, by using the
distances defined by the matrix $\covhat$ as the prox-function in
Nesterov's work. These variants enjoy the usual improvements of
$\order(1/t^2)$ iteration complexity in the smooth and
$\sqrt{\cond_{\linkloss}}$ dependence in the strongly convex setting,
while retaining the metric-free nature of
Algorithm~\ref{alg:glm-multi-full}. 

It is also quite easy to extend the algorithm to multi-label settings,
with the only difference being that the vector $y$ of labels now lives
on the hypercube instead of the simplex. This only amounts to a minor
modification of the projection step in
Algorithm~\ref{alg:glm-multi-calib}. 

Overall, we believe that our approach revisits many old and deep ideas
to develop algorithms that are practically very effective. We believe
that it will be quite fruitful to understand these methods better both
theoretically and empirically in further research.

%\newpage
\begin{footnotesize}
\bibliographystyle{icml2014}
\bibliography{bib}
\end{footnotesize}

\clearpage
\newpage
\onecolumn

\appendix
\section{Appendix}

\subsection{Proofs}

\begin{proof-of-theorem}[\ref{thm:glm-multi-newton}]
  
  We start by noting that the Lipschitz and strong monotonicity
  conditions on $\nabla \linkloss$ imply the smoothness and strong
  convexity of the function $\linkloss(u)$ as a function $u \in
  \R^k$. In particular, given any two matrices $W_1, W_2 \in
  \R^{k\times d}$, we have as the following quadratic upper bound as a
  consequence of the Lipschitz condition~\eqref{eqn:lipschitz-multi} 

  \begin{equation*}
  \linkloss(W_1x) \leq \linkloss(W_2x) + \ip{\nabla \linkloss(W_2 x^T )
    x}{W_1 - W_2} + \frac{L}{2} \norm{W_1x - W_2 x}_2^2.
  \end{equation*}
  The strong monotonicity condition~\eqref{eqn:multi-link-strong}
  yields an analogous lower bound 

  \begin{equation*}
  \linkloss(W_1x) \geq \linkloss(W_2x) + \ip{\nabla \linkloss(W_2 x^T )
    x}{W_1 - W_2} + \frac{\mu}{2} \norm{W_1x - W_2 x}_2^2.
  \end{equation*}

  In order to proceed further, we need one additional piece of
  notation. Given a positive semi-definite matrix $M \in \R^{d \times
    d}$, let us define 
    \[
        \norm{W}_M = \sum_{i=1}^k {W^{(i)}}^T M W^{(i)}, 
    \]
    where $W^{(i)}$ is the $i$-th column of $W$, to be a Mahalanobis norm on
  matrices. Let us also recall the definition of the
  matrix $\covhat$ from Algorithm~\ref{alg:glm-multi-full}. Then
  adding the smoothness condition over the examples $i=1,2,\ldots,n$
  yields the following conditions on the sample average loss under
  condition~\eqref{eqn:lipschitz-multi}:

  \begin{equation*}
    \lossn(W_1) \leq \lossn(W_2) + \ip{\nabla \lossn(W_2)}{W_1 - W_2}
    + \frac{L}{2} \norm{W_1 - W_2}_{\covhat}^2. 
  \end{equation*}
  This implies that our objective function $\lossn$ is $L$-smooth in
  the metric induced by $\covhat$, and the update
  rule~\eqref{eqn:glm-multi-newton} corresponds to gradient descent on
  $\lossn$ under this metric with a step-size of $1/L$. The first part
  of the theorem now follows from Corollary 2.1.2 of
  Nesterov~\cite{Nesterov04}.

  As for the second part, we note that under the strong monotonicity
  condition, we have the lower bound 

  \begin{equation*}
    \lossn(W_1) \geq \lossn(W_2) + \ip{\nabla \lossn(W_2)}{W_1 - W_2}
    + \frac{\mu}{2} \norm{W_1 - W_2}_{\covhat}^2. 
  \end{equation*}
  Hence the objective $\lossn$ is $\mu$-strongly convex and $L$-smooth
  in the metric induced by $\covhat$. The result is now a consequence
  of Theorem 2.1.15 of Nesterov~\cite{Nesterov04}. 
\end{proof-of-theorem}

We now provide the proof of Theorem~\ref{thm:multi-calib}. First, a
little more on our assumption on $g^{-1}$. By convex duality, this
inverse exists and if $\linkloss$ is a closed, convex function then
$(\nabla \linkloss)^{-1} = \nabla \linkloss^{*}$, where
$\linkloss^{*}$ is the Fenchel-Legendre conjugate of
$\linkloss$. Throughout this section, assume that $\nabla \linkloss$
is $L$-Lipschitz continuous. By standard duality results regarding
strong-convexity and smoothness, this implies that the conjugate
$\linkloss$ is $1/L$-strongly convex. Specifically, we have the useful
inequality

\begin{equation}
  \ip{\nabla \linkloss^{*}(u) - \nabla \linkloss^{*}(v)}{u - v} \geq
  \frac{1}{L} \norm{u - v}_2^2, \quad \mbox{for all}~~u,v \in \R^k.
\label{eqn:multi-strong-dual}
\end{equation}

Similarly, due to our assumption about the strong monotonicity of
$\nabla \linkloss$, it is the case that $\nabla \linkloss^{*}$ is
Lipschitz continuous and satisfies

\begin{equation}
  \ip{\nabla \linkloss^{*}(u) - \nabla \linkloss^{*}(v)}{u - v} \leq
  \frac{1}{\mu} \norm{u - v}_2^2, \quad \mbox{for all}~~u,v \in \R^k.
\label{eqn:multi-smooth-dual}
\end{equation}
As a specific consequence, note that it is natural to assume that
$\nabla \linkloss(0) = \ones/k$, where $\ones$ is the all ones
vector. That is the expectation is uniform over all the labels when
the weights are zero. Under this condition, it is easy to obtain as a
consequence of Equation~\ref{eqn:multi-smooth-dual} that

\begin{align}
  \nonumber \norm{\nabla \linkloss^{*}(u)}_2 &= \norm{\nabla
    \linkloss^{*}(u) - \nabla \linkloss^{*}(\ones/k)}_2\\
  &\leq \frac{1}{\mu} \norm{u - \ones/k}_2 \leq \frac{1}{\mu}
  \left(\norm{u}_2 + \frac{1}{\sqrt{k}}\right).
  \label{eqn:multi-lip-dual}
\end{align}

Together with these facts, we now proceed to establish
Theorem~\ref{thm:multi-calib}. 

\begin{proof-of-theorem}[\ref{thm:multi-calib}]

We will use $\ynoclip{t}_i = \calweights{t} G(\ziter{t}_i)$ to denote
the predictions at each iteration before the clipping operation. For
brevity, we use $\yitermat{t} \in \R^{n\times k}$ to denote the matrix
of all the predictions at iteration $t$, with a similar version
$\zitermat{t}$ for $\ziter{t}$

The following basic properties of linear regression that are
helpful. By the optimality conditions for $\xweights{i}$ and
$\calweights{i}$, 

\begin{align}
  \nonumber \sum_{i=1}^n (\ziter{t}_i - y_i) x_i^T  &=
  \mathbf{0}_{k\times d}, \quad \mbox{and}\\
  \sum_{i=1}^n (\ynoclip{t}_i - y_i) G(\ziter{t-1}_i)^T  &=
  \mathbf{0}_{|\linkbasis|\times d}. 
  \label{eqn:normal}
\end{align}

In particular, multiplying the first equality with the optimal weight
matrix $\Wopt$ yields 

\begin{equation*}
  \ip{\sum_{i=1}^n (\ziter{t}_i - y_i) x_i^T }{\Wopt} = 0.
\end{equation*}

Rearranging terms and recalling that $\nabla \linkloss(\Wopt x_i) =
y_i$ due to the generative model~\eqref{eqn:glm-multi} further allows
us to rewrite

\begin{equation}
  \ip{\sum_{i=1}^n (\ziter{t}_i - y_i)}{\nabla \linkloss^*(y_i)} = 0.
  \label{eqn:normal-to-optinv}
\end{equation}

Combining this with our earlier
inequality~\eqref{eqn:multi-strong-dual} further yields 

\begin{align}
  \sum_{i=1}^n \ip{\ziter{t}_i - y_i}{\nabla \linkloss^*(\ziter{t}_i)}
  &= \sum_{i=1}^n \ip{\ziter{t}_i - y_i}{\nabla
    \linkloss^*(\ziter{t}_i) - \nabla \linkloss^*(y_i)} \geq
  \frac{1}{L} \sum_{i=1}^n \norm{\ziter{t}_i - y_i}_2^2. 
  \label{eqn:residual-lb}
\end{align}

Having lower bounded this inner product term, we obtain an upper bound
on it which will complete the proof for convergence of the algorithm. Note
that $\calweights{t}$ minimizes the
objective~\eqref{eqn:alg-calib-multi2}. Since $\nabla \linkloss^* \in
\mbox{lin}(\linkbasis)$, we have for any constant $\beta \in \R$

\begin{equation*}
  \sum_{i=1}^n \norm{\ynoclip{t}_i - y_i}_2^2 \leq \sum_{i=1}^n
  \norm{\ziter{t}_i - y_i - \beta \nabla
    \linkloss^{*}(\ziter{t})}_2^2. 
\end{equation*}
We optimize over the choices of $\beta$ to obtain the best
inequality above, which yields the error reduction as 

\begin{align}
  \sum_{i=1}^n \norm{\ynoclip{t}_i - y_i}_2^2 &\leq \sum_{i=1}^n
  \norm{\ziter{t}_i - y_i}_2^2 - \frac{\ip{\zitermat{t} - Y}{\nabla
      \linkloss^*(\zitermat{t})}^2}{\norm{\nabla
      \linkloss^*(\zitermat{t})}_F^2}.
  \label{eqn:residual-ub}
\end{align}
We now proceed to upper bound the denominator in the second term in
the right hand side of the above bound. Note that from
Equation~\ref{eqn:multi-lip-dual}, we have the upper bound 

\begin{align*}
  \norm{\nabla \linkloss^{*}(\ziter{t}_i)}_2^2 &\leq \frac{2}{\mu^2}
  \left( \norm{\yiter{t}_i}_2^2 + \frac{1}{k}\right)\\
  &\leq \frac{2}{\mu^2 k} + \frac{4}{\mu^2} \left( \norm{\ziter{t}_i -
    y_i}_2^2 + \norm{y_i}_2^2\right)\\
  &\leq \frac{2}{\mu^2k} + \frac{4}{\mu^2} \left( \norm{\ziter{t}_i -
    y_i}_2^2 + 1\right),
\end{align*}
where the final inequality follows since $y_i$ has a one in precisely
one place and zeros elsewhere. Adding these inequalities over the
examples, we further obtain 

\begin{align*}
  \sum_{i=1}^n \norm{\nabla \linkloss^{*}(\ziter{t}_i)}_2^2 &\leq
  \frac{2n}{\mu^2k} + \frac{4}{\mu^2} \left(
  \sum_{i=1}^n\norm{\ziter{t}_i - y_i}_2^2 + n\right)\\ 
  &\leq \frac{6n}{\mu^2} + \frac{4}{\mu^2} \sum_{i=1}^n
  \norm{\yiter{t}_i - y_i}_2^2,
\end{align*}
where the last step is an outcome of solving the regression problem in
the step~\eqref{eqn:alg-calib-multi1}. Finally, observe that
$\yiter{t}$ is a probability vector in $\R^k$ as a result of the
clipping operation, while $y_i$ is a basis vector as before. Taking
these into account, we obtain the upper bound

\begin{align}
  \sum_{i=1}^n \norm{\nabla \linkloss^{*}(\ziter{t}_i)}_2^2 \leq
  \frac{22nk}{\mu^2}.
  \label{eqn:upper-norm-ziter}
\end{align}

We are almost there now. Observe that we can substitute this upper
bound into our earlier inequality~\eqref{eqn:residual-ub} and obtain 

\begin{align}
  \sum_{i=1}^n \norm{\ynoclip{t}_i - y_i}_2^2 &\leq \sum_{i=1}^n
  \norm{\ziter{t}_i - y_i}_2^2 - \frac{\mu^2}{22nk}\ip{\zitermat{t} -
    Y}{\nabla \linkloss^*(\zitermat{t})}^2.
  \label{eqn:residual-ub-better}
\end{align}

We can further combine this inequality with the lower
bound~\eqref{eqn:residual-lb} and obtain 

\begin{align}
  \sum_{i=1}^n \norm{\ynoclip{t}_i - y_i}_2^2 &\leq \sum_{i=1}^n
  \norm{\ziter{t}_i - y_i}_2^2 - \frac{\mu^2}{22nkL^2}
  \left(\sum_{i=1}^n \norm{\ziter{t}_i - y_i}_2^2 \right)^2.
  \label{eqn:ziter-to-noclip}
\end{align}

This would yield a recursion if we could replace the term
$\sum_{i=1}^n \norm{\ynoclip{t}_i - y_i}_2^2$ with $\sum_{i=1}^n
\norm{\ziter{t+1}_i - y_i}_2^2$. This requires the use of two critical
facts. Note that $\yiter{t}_i$ is a Euclidean projection of
$\ynoclip{t}_i$ onto the probability simplex and $y_i$ is an element
of the simplex. Consequently, by Pythagoras theorem, it is easy to
conclude that 

\begin{equation*}
  \norm{\yiter{t}_i - y_i}^2 \leq \norm{\ynoclip{t}_i - y_i}_2^2. 
\end{equation*}
Furthermore, the regression update~\eqref{eqn:alg-calib-multi1}
guarantees that we have 

\begin{align*}
  \sum_{i=1}^n \norm{\ziter{t+1} - y_i}_2^2 \leq \sum_{i=1}^n
  \norm{\yiter{t}_i - y_i}_2^2 \leq  \sum_{i=1}^n \norm{\ynoclip{t}_i
    - y_i}_2^2. 
\end{align*}

Combining the update with earlier bound~\eqref{eqn:ziter-to-noclip},
we finally obtain the recursion we were after:

\begin{align}
  \sum_{i=1}^n \norm{\ziter{t+1}_i - y_i}_2^2 &\leq \sum_{i=1}^n
  \norm{\ziter{t}_i - y_i}_2^2 - \frac{\mu^2}{22nkL^2}
  \left(\sum_{i=1}^n \norm{\ziter{t}_i - y_i}_2^2 \right)^2.
  \label{eqn:ziter-recur}
\end{align}

Let us define $\epsilon_t = \frac{1}{n} \sum_{i=1}^n \norm{\ziter{t}_i
  - y_i}_2^2$. Then the above recursion can be simplified as 

\begin{align*}
  \epsilon_{t+1} &\leq \epsilon_t - \frac{\mu^2}{22n^2kL^2}
  \left(\sum_{i=1}^n \norm{\ziter{t}_i - y_i}_2^2 \right)^2\\
  &= \epsilon_t - \frac{\mu^2}{22kL^2} \epsilon_t^2. 
\end{align*}

It is straightforward to verify that the recursion is satisfied by
setting $\epsilon_t = 22kL^2/(\mu^2t) =
22\cond_{\linkloss}^2/t$. Lastly, observe that as a consequence of the
update~\eqref{eqn:alg-calib-multi2} and the contractivity of the
projection operator, we also have 

\begin{equation*}
  \frac{1}{n} \sum_{i=1}^n \norm{\yiter{t}_i - y_i}_2^2 \leq
  \frac{1}{n} \sum_{i=1}^n \norm{\ziter{t}_i - y_i}_2^2 \leq
  \frac{22\cond_{\linkloss}^2}{t}, 
\end{equation*}
which completes our proof. 

\end{proof-of-theorem}

\subsection{Experimental Details}

\subsubsection{MNIST}

We utilize random Fourier features after PCA-ing the data down to $50$
dimensions. This projection alone is a considerable speed improvement
(for all algorithms) with essentially no loss in accuracy. We then
applied the random features as described in \cite{rahimi2007random}.
We should note that our reliance on these features is not critical; both random
low degree polynomials or random logits (with weights chosen from
Gaussian distribution of appropriate variance, so the features have reasonable sensitivity)
give comparable performance. Notably, the random logits seem to need
substantially fewer features to get to the same accuracy level (maybe by a
factor of 2).

The kernel bandwidth $s$ is set using the folklore ``median trick'', 
i.e.,~$s$ is the median pairwise distance between training points.
%, as described in ~\cite{rahimi2007random}. %It is not described there.

For VW, we
first generated its native binary input format. VW uses a
separate thread for loading data, so that the data loading time is
negligible. For Liblinear, we used a modified version which can
directly accept dense input features from MATLAB. As with the rest of 
the experiments, our methods are implemented in MATLAB. 
For all methods, the computation for model fitting
is done in a single 2.4 GHz processor.

\subsubsection{20 Newsgroups and RCV1}

Our theoretical understanding suggests that when the condition number
is small then we expect first
order methods to be highly
effective. Furthermore, these methods enjoy an extra computational
advantage when the data is sparse, as the computation of the gradient
updates are linear time in the sparsity level. Here, as expected,
methods that ignore second order information are faster than
stagewise procedures.

We used unigram features with 
with $\log$-transformed term frequencies as values. We also 
removed all words that appear fewer than 3 times on the training
set. For RCV1 the task was to predict whether a news story 
should be classified as ``corporate'', ``economics'', ``government'', 
or ``markets''. Stories belonging to more than one category were 
removed. For RCV1, we switched the roles of training and test folds with
665 thousand and 20 thousand examples for training and testing respectively.
Tokens in RCV1 had already been stemmed and stopwords
had been removed resulting in about 20000 features.
For NEWS20 we did not perform such 
preprocessing leading to about 44000 features. We split the data into 
15000 for training and the rest for testing.

%First, we verified that the data matrices for these problems are well 
%conditioned. In Figure~\ref{fig:text}(a) we show the normalized 
%spectrum for both data matrices. We limit the spectrum to the top 1000 
%singular values for efficiency reasons. In contrast with MNIST, the absence 
%of any sharp increase in this spectrum suggests that the top 1000 
%singular values are roughly of the same magnitude. Therefore, the intrinsic
%dimension of this data is at least 1000. Furthermore, the ratio of 
%$\sigma_1$ to $\sigma_{1000}$, our proxy for the condition number, is
%19.8 for NEWS20 and 14 for RCV1.
%
%The ratio of the 2nd singular value to the 1000th one (as a proxy
%for the condition number) and is 19.8 for NEWS20 and 14 for
%RCV1. In contrast, for MNIST, this condition number is about 
%72000 (when computed with 3000 random Fourier features). 
%Figure~\ref{fig:text}(a) shows the normalized spectrum for
%both data matrices.

For both datasets the bag of words representation is too ``verbose'' to 
handle efficiently with commodity hardware. Therefore, we exclusively 
used the stagewise approach here. In each stage, we picked a subset 
of the original features (i.e., no random projections). To speed up the 
algorithm, we ordered the features by the magnitude of the gradient.

For NEWS20 we used a batch size of 500, 2 passes over the
data and regularization $\lambda=30$. We recomputed the ordering 
between the first and second pass. For RCV1 we used 2
batches of size 2000 and computed the ordering for each batch.
The results are shown in Table~\ref{fig:text}(b). Even though
the stagewise procedure can produce models with the same or better
generalization error than Liblinear and VW, it is not as efficient. 
This is again no surprise since when the condition number is small, 
methods based on stochastic gradient, such as VW, are optimal~\cite{bottou08tradeoff}.

\end{document}